\documentclass[10pt,twocolumn]{article}

\usepackage[margin=1in]{geometry}
\usepackage{microtype}

\usepackage{amsmath,amssymb,mathtools,amsthm}

\usepackage{graphicx}
\usepackage{subcaption}
\usepackage{booktabs}
\usepackage{makecell}
\setcellgapes{2pt}\makegapedcells

\usepackage{algorithm}
\usepackage{algpseudocode}
\usepackage{enumitem}

\usepackage{tikz}
\usetikzlibrary{arrows.meta,positioning,fit,backgrounds}

\usepackage[numbers,sort&compress]{natbib}
\usepackage[hidelinks]{hyperref}
\usepackage[capitalize,noabbrev]{cleveref}

\newtheorem{theorem}{Theorem}[section]
\newtheorem{proposition}[theorem]{Proposition}
\newtheorem{lemma}[theorem]{Lemma}
\newtheorem{corollary}[theorem]{Corollary}
\theoremstyle{definition}
\newtheorem{definition}[theorem]{Definition}
\newtheorem{assumption}[theorem]{Assumption}
\theoremstyle{remark}

\title{Portfolio Reinforcement Learning with Scenario-Context Rollout}

\author{
Vanya Priscillia Bendatu\\
National University of Singapore\\
\texttt{vanyabendatu@u.nus.edu}
\and
Yao Lu\\
National University of Singapore\\
\texttt{luyao@comp.nus.edu.sg}
}
\date{}

\begin{document}
\maketitle
\pagestyle{plain} 
\begin{abstract}
Market regime shifts induce distribution shifts that can degrade the performance of portfolio rebalancing policies. We propose macro-conditioned scenario-context rollout (SCR) that generates plausible next-day multivariate return scenarios under stress events. However, doing so faces new challenges, as history will never tell what would have happened differently. As a result, incorporating scenario-based rewards from rollouts introduces a reward--transition mismatch in temporal-difference learning, destabilizing RL critic training. 

We analyze this inconsistency and show it leads to a mixed evaluation target. Guided by this analysis, we construct a counterfactual next state using the rollout-implied continuations and augment the critic agent's bootstrap target. Doing so stabilizes the learning and provides a viable bias-variance tradeoff.

In out-of-sample evaluations across 31 distinct universes of U.S. equity and ETF portfolios, our method improves Sharpe ratio by up to 76\% and reduces maximum drawdown by up to 53\% compared with classic and RL-based portfolio rebalancing baselines. 
\end{abstract}

\section{Introduction}
\label{sec:intro}

The financial market is non‑stationary. Over time, the expected returns, volatilities, and cross‑asset dependency change as the macroeconomic structure and the market dynamics evolve. In portfolio management and rebalancing applications, regime changes may cause degraded returns due to the dependencies during downturns \citep{longin2001,CampbellKoedijkKofman2002,AngChen2002}. A real-world portfolio policy must be resilient to distribution shifts and synchronized sell‑offs \citep{levine2020offline,fujimoto2019offpolicy,kumar2020cql}.

Risk management traditionally addresses robustness through scenario analysis and stress testing. Agent-based macro modeling and simulation~\citep{poledna2023economic,westerhoff2018agent} push this idea further such that policies can be evaluated under counterfactual scenarios while preserving economically meaningful structures. On the other hand, reinforcement learning (RL) enables portfolio rebalancing and treats the asset allocation problem as a Markov decision process (MDP), aiming to maximize the return while following trading limits and constraints \citep{moody2001learning,jiang2017portfolio,fischer2018deep,sutton2018reinforcement}. Recent policy‑gradient methods, such as proximal policy optimization (PPO) \citep{Schulman2017PPO}, stabilize training with clipped objectives and bootstrapped value targets.

In this paper, we draw inspiration from agent-based macro modeling and explore this idea in training RL agents for portfolio management. To improve next-state estimation, we propose Scenario-Context Rollout (SCR), a leak-safe feedback mechanism to produce a distribution of next-day joint returns under potential economic shocks. However, doing so poses a key technical challenge: while outcomes can be evaluated under plausible simulated scenarios, the agent observes only the realized transitions from the market history and cannot interact with the market to test what‑if actions \citep{levine2020offline}. As a result, alternative trajectories are only hypothetical. This problem worsens in distribution shifts; the policy may generate state and action pairs that are not supported by economic structures \citep{kumar2019bear,kumar2020cql}.

To deepen our comprehension of this problem, we in theory formalize {scenario-conditioned learning} with historical market data and show that coupling scenario-based rewards with tape-based (realized history) continuations induces {hybrid Bellman operator} \citep{Puterman1994,YuMahmoodSutton2018} under outcome-dependent state memory and causes a reward–transition mismatch in temporal-difference (TD) learning. Accordingly, we use the Wasserstein-form operator bounds \citep{Villani2008optimal,FernsPanangadenPrecup2004} to prove that this hybrid operator generally has a distinct fixed point from the {scenario-consistent} objective, creating an inherent fixed-point bias. This divergence is not only a nuisance but also exposes a bias--variance tradeoff in critic training.  

Based on this analysis, we propose to augment the critic's bootstrap target with a counterfactual continuation that is constructed from the SCR returns. The resulting target interpolates between a tape and a scenario bootstrap; we show that doing so improves training stability and robustness while adhering to economic structure and historical observations.

We evaluate our solution on day-to-day portfolio rebalancing across 31 distinct universes of U.S. equity and ETF portfolios. We compare against classic portfolio rebalancing algorithms, tape-only PPO, risk-penalized, and rollout-based baselines. Results show that our solution improves out-of-sample Sharpe ratio by up to 76\% and reduces maximum drawdown by over 53\%, while providing a more stable RL training.

\vspace{0.05in}\noindent\textbf{Contributions} of this paper can be summarized as follows:
\begin{itemize}[leftmargin=*,nosep]
\item We introduce SCR for RL-based portfolio management. SCR is a macroeconomics-guided feedback mechanism that produces a distribution of returns under economic shock scenarios.
\item We analyze in theory that scenario rewards together with tape continuations induce a hybrid Bellman operator and yield a fixed-point bias characterized by Wasserstein-form bounds. This implies a bias-variance tradeoff in critic training. We design our solution based on this analysis.
\item Our solution shows empirical gains using historical market data over state-of-the-art portfolio rebalancing algorithms and RL models.
\end{itemize}

\section{Method}
\label{sec:methods}
Rebalancing close-to-close trading portfolios is a sequential decision problem. At the close of every trading day $t$, we pick a vector of weights  $\mathbf w_t \in \mathbb{R}^N$ across $N$ assets. 
To calculate the performance of the portfolio, we use the asset return vector $\mathbf r_{t+1} \in \mathbb{R}^N$. Each individual element $r_{t+1, i}$ of this vector is defined as 
\begin{equation}
    r_{t+1,i} := \frac{p_{t+1,i} -p_{t,i}}{p_{t,i}}, \quad i = 1, \dots, N,
\end{equation}
where $p_{t,i}$ represents the closing price of asset $i$ on day $t$. The final realized net portfolio return is then given by
\begin{equation}
    R_{p,t+1} = \mathbf w^\top_t \mathbf r_{t+1} - \text{cost}_t,
\end{equation}
\noindent where $\text{cost}_t$ is a term that increases with portfolio turnover \citep{olivaresnadal2018,vassallo2022}.

\begin{figure*}[t]
  \centering
  \includegraphics[angle=-90, width=0.85\textwidth]{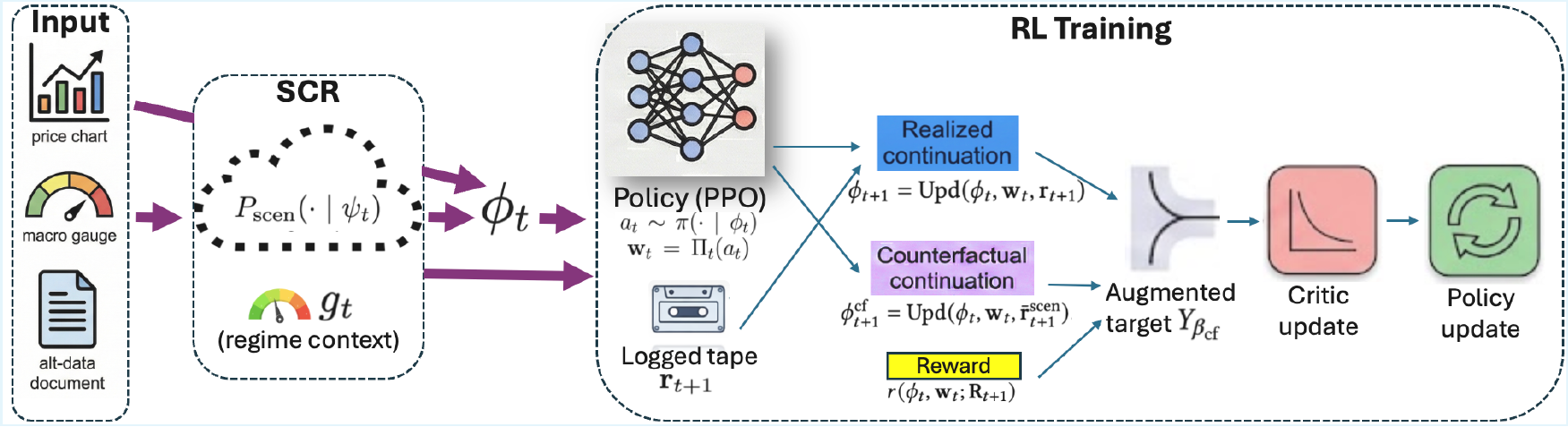}
  \caption{\textbf{Overview of our RL paradigm.} \textsc{SCR} produces a conditional scenario distribution over next-day joint return vectors. The critic is trained using a bootstrap target that augments the realized continuation bootstrap with a counterfactual continuation. }
\label{fig:arch_main}
\end{figure*}

Figure \ref{fig:arch_main} summarizes our scenario-conditioned RL pipeline.
We follow the standard PPO actor-critic training paradigm \citep{Schulman2017PPO} and utilize a learned value function together with a critic bootstrap target. Our objective is to enhance robustness in the face of regime shifts and changes in distribution.

Specifically, we gather regime/exposure features which serve as a retrieval descriptor denoted by $\psi_t$. Crucially, $\psi_t$ excludes portfolio-history components (e.g., previous weights or internal memory states), ensuring that scenario retrieval relies solely on market conditions instead of the agent’s past trajectory. Therefore, Scenario-Context Rollout (SCR) estimates a conditional distribution $P_{\mathrm{scen}}(\cdot\mid\psi_t)$ over next-day joint return vectors induced by {macro scenarios}.
Let $\mathbf R_{t+1}\sim P_{\mathrm{scen}}(\cdot\mid\psi_t)$ denote a generic macro scenario draw.
To approximate expectations, we sample independent and identically distributed (i.i.d.) macro scenarios
\[
\mathbf R^{(s)}_{t+1}\stackrel{\text{i.i.d.}}{\sim} P_{\mathrm{scen}}(\cdot\mid\psi_t),\qquad s=1,\dots,S.
\]
We define $\bar{\mathbf r}^{\mathrm{scen}}_{t+1}$ as a compact representation used in the counterfactual continuation (Sec.~\ref{subsec:methods_ccm}).
This joint-return, conditioned by the prevailing regime, captures the dependencies and tail risks that become prominent in times of stress.

At each decision time $t$, the policy observes an input state $\phi_t=(z_t,h_t)$ containing only information available by time $t$.
In this context, $z_t$ encompasses market features and portfolio history
and $h_t$ provides longer-horizon context including SCR outputs.
The policy model generates an action $a_t$. To ensure that the final portfolio is valid, we apply a set of operational constraints $\mathcal{W}_t$, covering box, leverage, and turnover constraints: \begin{equation}
\mathbf w_t = \Pi_t(a_t)\in\mathcal W_t,
\label{eq:feas_exec}
\end{equation}
 where $\Pi_t$ denotes the projection onto $\mathcal W_t$. All constraint budgets are treated as held identical across all strategies.

Let $r(\phi,\mathbf w;\mathbf x)$ denote the per-step reward evaluated at state $\phi$ using weights $\mathbf w$ and based on an input return vector $\mathbf x$. We train PPO to maximize a discounted objective where rewards are evaluated under \textsc{SCR} macro scenarios, which involve a conditional distribution of plausible next-day joint returns:
\begin{equation}
\label{eq:jscen_clean}
J_{\mathrm{scen}}(\pi)
:=\mathbb E_{\pi,P_{\mathrm{scen}}}\!\left[
\sum_{t=0}^{T-1}\delta^t\, r(\phi_t,\mathbf w_t;\mathbf R_{t+1})
\right].
\end{equation}
The concrete form of reward $r(\phi_t,\mathbf w_t;\mathbf R_{t+1})$ is defined in Sec.~\ref{subsec:methods_ccm}.
We use a discount factor $\delta\in(0,1)$ to ensure stability in temporal-difference (TD) bootstrapping. That is, in discounted MDPs, the Bellman operator acts as a contraction with a modulus of $\delta$. This enhances the conditioning of value targets and mitigates sensitivity to noise arising from long-horizon estimations  (e.g.,~\cite{sutton2018reinforcement, bertsekas1996ndp}).

It is important to recognize that a logged market tape (i.e., a fixed, non-rewinding historical record) provides only a single realized next-day return vector $\mathbf r_{t+1}$ at each time $t$,
and thus only a single realized continuation.
By utilizing executed weights $\mathbf w_t$, we update the realized next state through function $\phi_{t+1}=\mathrm{Upd}(\phi_t,\mathbf w_t,\mathbf r_{t+1})$. Conversely, Eq.~\eqref{eq:jscen_clean} specifies \emph{how actions are scored} in macro scenario estimation, but the logged tape determines the realized continuation.
This creates a mismatch between (i) scenario-based reward scoring (as in $J_{\mathrm{scen}}(\pi)$) and (ii) the realized next-state transition $\phi_{t+1}$.
We fix the gap here using the macro scenario conditional mean
$\bar{\mathbf r}^{\mathrm{scen}}_{t+1}$ by constructing a counterfactual continuation: $\phi^{\mathrm{cf}}_{t+1}=\mathrm{Upd}(\phi_t,\mathbf w_t,\bar{\mathbf r}^{\mathrm{scen}}_{t+1})$,
and then mixing the critic bootstraps evaluated at $\phi_{t+1}$ and $\phi^{\mathrm{cf}}_{t+1}$ in the augmented TD target $Y_{\beta_{\mathrm{cf}}}$
(see Sec.~\ref{subsec:methods_ccm}). This results in a bootstrap signal that is aligned with the scenario and exhibits low variance, while remaining grounded in the tape. Ultimately, PPO updates the critic based on $Y_{\beta_{\mathrm{cf}}}$ and updates the policy accordingly.

\subsection{Scenario-Context Rollout (\textsc{SCR})} 
\label{subsec:methods_SCR}
\textbf{Regime representation.}
Each trading day $t$ can be represented by a low-dimensional \emph{regime embedding} $U_t\in\mathbb R^k$ to summarize the market. $U_t$ is learned to capture patterns in two sources of information: (i) market--macro behavior (cross-sectional returns and macro shocks) and (ii) policy-text signals. Intuitively, nearby points in the shock space correspond to days with similar ``macro-financial regimes.'' A \emph{shock day} is hence a day whose regime embedding $U_t$ contains an anomaly; this can be learned during training as a binary indicator $\mathrm{shock}_t\in\{0,1\}$.

As shocks often happen sequentially, spanning days to months. We cluster subsequent shock days into a \emph{shock channel}, assigning each shock-day embedding to an existing cluster or creates a new one when it is farther than a distance threshold $\lambda^2$ from all existing \emph{channel centroids} using the running mean embedding of each channel.
We keep a \emph{ShockLedger}, a compact catalog of shock channels that encapsulates the channel’s macro signature and representative movers.
To obtain a \emph{daily} regime characteristic for conditioning, we aggregate recent channel hits into a binary activation pattern, considering the activated channels as active near time $t$.
This yields a channel-activation vector $\boldsymbol{\chi}_t\in\{0,1\}^C$, where $\chi_{t,c}=1$ if and only if channel $c$ is assigned to at least one shock day within a brief trailing lookback window concluding at time $t$.
The lookback aggregation reflects the enduring effects of macro shocks over several days. This reduces daily fluctuations in the output of the detector, resulting in a more stable conditioning signal.

\vspace{0.05in}\noindent\textbf{Scenario retrieval and rollout.}
Macro scenario retrieval is facilitated through channel activations using $\psi_t$ with the preprocessed regime and exposure data that is accessible at time $t$ (e.g., macro shocks, channel activations $\boldsymbol{\chi}_t$, and exposure summaries). To uphold leak-safety, $\psi_t$ keeps only necessary features and omits those related to the portfolio history. 
Hence, SCR maintains a library of return vectors indexed by a series of historical decision times $\mathcal I$:
\begin{equation}
\mathcal D
:= \{(\widetilde{\mathbf r}_{u+1}, \psi_u)\}_{u\in\mathcal I}, 
\label{eq:SCR_library}
\end{equation}
where $u<t$ denotes a past decision time.
The vector $\widetilde{\mathbf r}_{u+1}$ is a one-step-ahead \emph{macro scenario return} that is derived from the macro context through a rolling macro-to-return map, using only information available at time $u$.

At time $t$, \textsc{SCR} restricts to past indices $\mathcal I_t:=\{u\in\mathcal I:\ u<t\}$ and retrieves nearest neighbors
by comparing the current descriptor $\psi_t$ and past descriptors $\{\psi_u\}_{u\in\mathcal I_t}$ in $\mathcal D$. Let $\mathrm{sim}(\cdot,\cdot)$ be a similarity score. We define the  $k$ nearest-neighbor indices as
\begin{equation}
\mathcal N_k(t) := \operatorname{TopK}_{u\in\mathcal I_t}\ \mathrm{sim}(\psi_t,\psi_u).
\label{eq:SCR_topk}
\end{equation}
Therefore, SCR produces the conditional distribution based on the retrieval:
\begin{equation}
P_{\mathrm{scen}}(\cdot\mid\psi_t)
:= \frac{1}{|\mathcal N_k(t)|}\sum_{u\in\mathcal N_k(t)} \delta_{\widetilde{\mathbf r}_{u+1}}(\cdot),
\label{eq:SCR_pscen}
\end{equation}
where $\delta_x$ denotes a point mass at $x$.
During RL training, we draw $S$ i.i.d.\ macro scenarios $\{\mathbf R^{(s)}_{t+1}\}_{s=1}^S$ from $P_{\mathrm{scen}}(\cdot\mid\psi_t)$.

\vspace{0.05in}\noindent\textbf{Regime context  $g_t$.}
Beyond scenario sampling, \textsc{SCR} outputs a regime context $g_t$ that gates risk-taking at decision time $t$. It acts as a risk-budget multiplier: $g_t\approx 1$ corresponds to normal conditions, while smaller $g_t$ indicates stress and
shrinks scenario-evaluated portfolio payoffs, discouraging aggressive exposures. We compute a nonnegative severity score $v_t$ via a short-horizon deterministic \emph{macro stress rollout} driven by the active channels $\boldsymbol{\chi}_t$, tracking the peak Mahalanobis \citep{mahalanobis2018reprint} stress along the rollout. We then normalize  $v_t$ against a trailing quantile baseline, as it may drift over time.

\begin{definition}[Quantile-normalized regime context $g_t$] 
\label{def:regimecontext}
Let $v_t\ge 0$ be the severity score at decision time $t$, and let $\mathcal W_t:=\{t-L_g,\dots,t-1\}$ be a trailing window of past decisions of length $L_g$ and a quantile $q_g\in(0,1)$. 
Define the reference scale $q_t := \mathrm{Quantile}_{q_g}\big(\{v_u:\ u\in\mathcal W_t\}\big)$,
and the regime context
\begin{equation}
g_t
:= \mathrm{clip}\!\left(1-\alpha_g\frac{v_t}{q_t+\varepsilon},\underline g,1\right)\in[\underline g,1],
\label{eq:regimecontext}
\end{equation}
where $\alpha_g>0$ controls sensitivity, $\underline g\in(0,1)$ sets a minimum value. We apply $g_t$ as an \emph{exposure gate} by scaling scenario portfolio returns
\[
\tilde u_{t+1}^{(s)} \;=\; g_t\,\langle \mathbf w_t,\mathbf R_{t+1}^{(s)}\rangle,
\]
so larger estimated severity (larger $v_t$) yields smaller $g_t$, attenuating the reward signal during periods of stress. 
In other words, it reduces risk-taking and provides smooth, effective risk management.
\end{definition}

\subsection{Counterfactual Continuation for Critic Target Augmentation}
\label{subsec:methods_ccm}

We use a deterministic function $\mathrm{Upd}:\ (\phi_t,\mathbf w_t,\mathbf x_{t+1})\mapsto \phi_{t+1}$, which advances the current state when fed with an \emph{input} return vector $\mathbf x_{t+1}\in\mathbb R^N$.\footnote{$\mathbf x_{t+1}$ is a placeholder for the return vector; on the tape we observe $\mathbf x_{t+1}=\mathbf r_{t+1}$.}
At time $t$, the agent selects an action $a_t$ and executes feasible portfolio weights
$\mathbf w_t=\Pi_t(a_t)$.
The tape provides the realized return vector $\mathbf r_{t+1}$ and the realized continuation
\begin{equation}
\phi_{t+1}
=\mathrm{Upd}\!\left(\phi_t,\mathbf w_t,\mathbf r_{t+1}\right),
\qquad \mathbf r_{t+1}\ \text{from tape}.
\label{eq:sc2b_real_state}
\end{equation}

This realized continuation mismatches with macro scenario estimates. We address this problem by constructing a \emph{counterfactual continuation} and augmenting the critic’s one-step bootstrap target.  
Specifically, given executed weights $\mathbf w_t$ and macro scenario samples
$\{\mathbf R_{t+1}^{(s)}\}_{s=1}^S\sim P_{\mathrm{scen}}(\cdot\mid\psi_t)$, we use the gated outcomes $\tilde u_{t+1}^{(s)}$ defined in Sec. ~\ref{subsec:methods_SCR} and their sample mean
\[
\bar{\tilde u}_{t+1}\;:=\;\frac{1}{S}\sum_{s=1}^S \tilde u_{t+1}^{(s)}.
\]
Integrating these macro scenario outcomes yields a risk-aware reward:
\begin{equation}
\begin{aligned}
r_t
:=\; & \bar{\tilde u}_{t+1}
-\lambda_{\rho}\Big(\mathrm{Risk}_{\eta}\!\left(\{\tilde u_{t+1}^{(s)}\}_{s=1}^S\right)+\eta\,\varepsilon\Big) \\
& - \mathrm{Reg}(\mathbf w_t,\mathbf w_{t-1}),
\end{aligned}
\label{eq:SCR_reward}
\end{equation}

where $\lambda_\rho\ge 0$ regulates the intensity of the tail-risk penalty, $\mathrm{Risk}_{\eta}$ represents a functional that is sensitive to tail conditions with a temperature parameter of $\eta>0$, and $\mathrm{Reg}$ accounts for trading frictions as well as structural penalties. 

The conditional mean return is hence approximated through Monte Carlo \citep{glasserman2004montecarlo}:
\begin{equation}
\bar{\mathbf r}^{\mathrm{scen}}_{t+1}
:= \mathbb E_{P_{\mathrm{scen}}(\cdot\mid\psi_t)}[\mathbf R_{t+1}]
\approx \frac{1}{S}\sum_{s=1}^S \mathbf R_{t+1}^{(s)}.
\label{eq:SCR_rbar}
\end{equation}
Using this approximation, we establish the counterfactual continuation by applying the same update function to $\bar{\mathbf r}^{\mathrm{scen}}_{t+1}$:
\begin{equation}
\phi^{\mathrm{cf}}_{t+1}
=\mathrm{Upd}\!\left(\phi_t,\mathbf w_t,\bar{\mathbf r}^{\mathrm{scen}}_{t+1}\right).
\label{eq:sc2b_cf_state}
\end{equation}
Notably, this counterfactual continuation is secured against data leakage as it relies solely on quantities available by time $t$. It is also aligned with macro scenario since $\bar{\mathbf r}^{\mathrm{scen}}_{t+1}$ is derived from the same macro scenario distribution $P_{\mathrm{scen}}(\cdot\mid\psi_t)$ utilized for the reward. 

\vspace{0.05in}\noindent\textbf{Augmenting Critic Bootstrap Target.}
Let $V_\omega(\phi)$ denote the critic, i.e., a parametric approximation to the state-value function under policy. Let $\delta\in(0,1)$ be the RL discount factor.
Using the same scenario-based reward $r_t$ from Eq.~\eqref{eq:SCR_reward}, let the one-step targets using (1) the tape-based realized continuation and (2) the counterfactual continuation respectively be
\[
Y_r := r_t + \delta\,V_\omega(\phi_{t+1}),
\qquad
Y_c := r_t + \delta\,V_\omega(\phi^{\mathrm{cf}}_{t+1}).
\]
Therefore we augment the critic target using
\begin{equation}
Y_{\beta_{\mathrm{cf}}}
=(1-\beta_{cf})\,Y_r+\beta_{cf}\,Y_c,
\qquad \beta_{cf}\in[0,1],
\label{eq:sc2b_mix}
\end{equation}
where $\beta_{cf}$ interpolates between tape-faithful bootstrapping
and scenario-aligned bootstrapping.
The critic is trained by regressing $V_\omega(\phi_t)$ onto $Y_{\beta_{\mathrm{cf}}}$, and the policy model undergoes updates through PPO-Clip with GAE advantages (Alg.~\ref{alg:sc2b_main}).

\begin{algorithm}[t]
\caption{Actor--critic training using logged tape with SCR macro scenarios and augmented targets}
\label{alg:sc2b_main}
\begin{algorithmic}[1]
\Require Logged tape $\{\phi_t,\mathbf r_{t+1}\}_{t=0}^{T-1}$; SCR module returning $(P_{\mathrm{scen}}(\cdot\mid\psi_t), g_t)$;
feasibility map $\Pi_t$; update $\mathrm{Upd}$; discount $\delta$; mixing $\beta_{cf}$; scenario sample size $S$
\For{each iteration of PPO update}
  \For{$t=0,\dots,T-1$ (traverse through logged dates)}
    \State Construct $\phi_t$; sample $a_t\sim\pi_\theta(\cdot\mid\phi_t)$;  $\mathbf w_t=\Pi_t(a_t)$
    \State Obtain $(P_{\mathrm{scen}}(\cdot\mid\psi_t), g_t)$ from the SCR module
    \State Sample $\{\mathbf R_{t+1}^{(s)}\}_{s=1}^S\sim P_{\mathrm{scen}}(\cdot\mid\psi_t)$
    \State Compute scenario reward $r_t$ using $\{\mathbf R_{t+1}^{(s)}\}$, $g_t$ (Eq.~\eqref{eq:SCR_reward})
    \State Compute scenario mean $\bar{\mathbf r}^{\mathrm{scen}}_{t+1}=\frac{1}{S}\sum_{s=1}^S \mathbf R_{t+1}^{(s)}$ (Eq.~\eqref{eq:SCR_rbar})
    \State Read realized $\mathbf r_{t+1}$ from tape; $\phi_{t+1}=\mathrm{Upd}(\phi_t,\mathbf w_t,\mathbf r_{t+1})$
    \State Set $\phi^{\mathrm{cf}}_{t+1}=\mathrm{Upd}(\phi_t,\mathbf w_t,\bar{\mathbf r}^{\mathrm{scen}}_{t+1})$ (Eq.~\eqref{eq:sc2b_cf_state})
    \State Form augmented target $Y_{\beta_{\mathrm{cf}}}$ using Eq.~\eqref{eq:sc2b_mix}
  \EndFor
  \State Update critic by regressing $V_\omega(\phi_t)$ onto $Y_{\beta_{\mathrm{cf}}}$
  \State Compute Generalized Advantage Estimation $\hat A_t$
  \State Update actor using PPO-Clip with $\hat A_t$
\EndFor
\end{algorithmic}
\end{algorithm}

\section{Continuation Mismatch and Fixes}
\label{sec:theory}

This section discusses the \emph{continuation mismatch} at the Bellman operator level \citep{Puterman1994,sutton2018reinforcement,bertsekas1996ndp,munos2005avi}
and explains why \emph{augmenting targets on the critic side} is both adequate and well-founded.

\vspace{0.05in}\noindent\textbf{Hybrid vs.\ scenario-consistent operators.}
Let a policy $\pi$ be defined along with any bounded value function $V$.
According to Sec. \ref{sec:methods}, $\psi$ be the leak-safe descriptor used by \textsc{SCR} and
$P_{\mathrm{scen}}(\cdot\mid\psi)$ for the induced scenario return law.
Let $P_{\mathrm{real}}(\cdot\mid\phi)$ denote the (conceptual) real-world conditional return law for the tape outcome.

We consider two Bellman-style operators that differ only in their bootstrapping techniques:
the \emph{hybrid} operator evaluates \emph{scenario reward} but bootstraps on the \emph{realized} continuation, while the
\emph{scenario-consistent} operator uses scenario outcomes for both reward and continuation:
\begin{equation}
(T_{\mathrm{hyb}}^\pi V)(\phi)
:= \mathbb E\!\left[
r(\phi,\mathbf w;\mathbf R)+\delta\,V\!\left(\mathrm{Upd}(\phi,\mathbf w,\mathbf r)\right)
\ \middle|\ \phi
\right],
\label{eq:Thyb}
\end{equation}
where $a\sim\pi(\cdot\mid\phi)$, $\mathbf w=\Pi(a)$, $\mathbf R\sim P_{\mathrm{scen}}(\cdot\mid\psi)$, and
$\mathbf r\sim P_{\mathrm{real}}(\cdot\mid\phi)$.
In contrast,
\begin{equation}
(T_{\mathrm{scen}}^\pi V)(\phi)
:= \mathbb E\!\left[
r(\phi,\mathbf w;\mathbf R)+\delta\,V\!\left(\mathrm{Upd}(\phi,\mathbf w,\mathbf R)\right)
\ \middle|\ \phi
\right],
\label{eq:Tscen_mismatch}
\end{equation}
with the same action sampling but using the scenario outcome within its continuation.
Thus $T_{\mathrm{hyb}}^\pi$ corresponds to \emph{scenario reward + realized continuation}, while $T_{\mathrm{scen}}^\pi$ corresponds to
\emph{scenario reward + scenario continuation}.
When $\mathrm{Upd}$ depends on the newly observed next-day return via the memory component $h$ and
$P_{\mathrm{scen}}(\cdot\mid\psi)\neq P_{\mathrm{real}}(\cdot\mid\phi)$, these operators typically converge to different fixed points.
This is because they take evaluation of the continuation term $V(\mathrm{Upd}(\phi,\mathbf w,\cdot))$ under different outcome distributions.

\begin{lemma}
\label{lem:reward_cancels}
Fix $\phi$ and take any bounded $V$. We define the function
\begin{equation}
f_{\phi,a}(\mathbf x)\coloneqq V\!\big(\mathrm{Upd}(\phi,\Pi(a),\mathbf x)\big).
\end{equation}
Then for any policy $\pi$,
\begin{equation}
\label{eq:reward_cancels_compact}
\begin{aligned}
(T_{\mathrm{hyb}}^\pi V - T_{\mathrm{scen}}^\pi V)(\phi)
&=\delta\,\mathbb E_{a\sim\pi(\cdot\mid\phi)}\!\big[\Delta(\phi,\psi,a)\big],\\
\Delta(\phi,\psi,a)
&:= \mathbb E_{\mathbf r\sim P_{\mathrm{real}}(\cdot\mid\phi)}
\big[f_{\phi,a}(\mathbf r)\big] \\
&\quad - \mathbb E_{\mathbf R\sim P_{\mathrm{scen}}(\cdot\mid\psi)}
\big[f_{\phi,a}(\mathbf R)\big].
\end{aligned}
\end{equation}

\end{lemma}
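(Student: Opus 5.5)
The identity follows by writing both operators as iterated expectations over the same action distribution and subtracting them. The reward terms cancel exactly, because both operators sample $\mathbf R$ from the same law. What remains is the difference of the two continuation terms.

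\textbf{Step 1: condition on the action.} Fix $\phi$; this also fixes the leak-safe descriptor $\psi$. In \eqref{eq:Thyb} and \eqref{eq:Tscen_mismatch} the conditional expectation given $\phi$ is taken over $a\sim\pi(\cdot\mid\phi)$, then $\mathbf w=\Pi(a)$, then the outcome draws. By the tower property I will write each operator as $\mathbb E_{a\sim\pi(\cdot\mid\phi)}[\,\cdot\,]$ of an inner conditional expectation given $(\phi,a)$. In the hybrid operator the inner expectation is over $\mathbf R\sim P_{\mathrm{scen}}(\cdot\mid\psi)$ and $\mathbf r\sim P_{\mathrm{real}}(\cdot\mid\phi)$. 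The integrand $r(\phi,\mathbf w;\mathbf R)+\delta V(\mathrm{Upd}(\phi,\mathbf w,\mathbf r))$ is a sum of a term depending only on $\mathbf R$ and a term depending only on $\mathbf r$. By linearity, the inner expectation therefore equals $\mathbb E_{\mathbf R}[r(\phi,\Pi(a);\mathbf R)]+\delta\,\mathbb E_{\mathbf r}[f_{\phi,a}(\mathbf r)]$. This requires no assumption about the joint coupling of $\mathbf R$ and $\mathbf r$, only their marginals.

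\textbf{Step 2: the scenario-consistent operator.} The inner expectation of $T^\pi_{\mathrm{scen}}V$ equals $\mathbb E_{\mathbf R}[r(\phi,\Pi(a);\mathbf R)]+\delta\,\mathbb E_{\mathbf R}[f_{\phi,a}(\mathbf R)]$, with the same $\mathbf R$-law as in the hybrid operator.

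\textbf{Step 3: subtract.} Subtracting the two operators inside the common outer expectation over $a$ cancels the reward expectations identically. The difference is $\delta\,(\mathbb E_{\mathbf r}[f_{\phi,a}(\mathbf r)]-\mathbb E_{\mathbf R}[f_{\phi,a}(\mathbf R)])=\delta\,\Delta(\phi,\psi,a)$. Pulling the constant $\delta$ out of the outer expectation gives \eqref{eq:reward_cancels_compact}.

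The only real obstacle is justifying the splitting and subtraction of expectations, i.e.\ integrability. Since $V$ is bounded, $f_{\phi,a}$ is bounded, so both continuation expectations are finite. For the reward term, I will assume, as is implicit in the operators being well defined on bounded $V$, that $\mathbb E_{a}\mathbb E_{\mathbf R}|r(\phi,\Pi(a);\mathbf R)|<\infty$. This holds in particular because $P_{\mathrm{scen}}(\cdot\mid\psi)$ in \eqref{eq:SCR_pscen} is a finite empirical mixture and $\Pi(a)\in\mathcal W$ is bounded. With this integrability, linearity and Fubini justify the steps, and the reward terms cancel as finite quantities rather than as $\infty-\infty$. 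I also need measurability of $\mathbf x\mapsto \mathrm{Upd}(\phi,\Pi(a),\mathbf x)$ and $a\mapsto\Delta(\phi,\psi,a)$, which I will take as standing assumptions on $\mathrm{Upd}$, $\Pi$ and $V$.
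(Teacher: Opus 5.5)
Your proof is correct and follows the paper's argument. Both write out the two operators and cancel the reward terms, because $\mathbf R$ is drawn from the same law $P_{\mathrm{scen}}(\cdot\mid\psi)$ in each (only the marginals matter, as you note), which leaves $\delta\,\mathbb E_{a\sim\pi(\cdot\mid\phi)}[\Delta(\phi,\psi,a)]$; the paper additionally rewrites the continuation difference as $\int V\,d(K_{\mathrm{real}}-K_{\mathrm{scen}})$ via the pushforward kernels of Definition~\ref{def:cont_kernels}, which is only cosmetic, while your integrability and measurability remarks supply rigor the paper leaves implicit.
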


\begin{proposition}
\label{prop:op_gap}
Assume that for each fixed $(\phi,\mathbf w)$ the map $\mathbf x\mapsto \mathrm{Upd}(\phi,\mathbf w,\mathbf x)$ is $L_h$-Lipschitz in $\mathbf x$
through the outcome-updated memory component $h$, i.e.,
$\|h(\mathrm{Upd}(\phi,\mathbf w,\mathbf x)) - h(\mathrm{Upd}(\phi,\mathbf w,\mathbf y))\|
\le L_h\|\mathbf x-\mathbf y\|_2$. Also, assume $V$ is $L_V$-Lipschitz in that memory (holding exogenous context fixed).
Let $o$ denote the information available at the decision time, $\phi=\Phi(o)$ and $\psi=\Psi(o)$ be summaries constructed from the same $o$.
Let $\delta\in(0,1)$ denote the discount factor used in the operators.

Define the mismatch by
\begin{equation}
\Delta_W
:= \sup_{o}\ 
W_1\!\Big(P_{\mathrm{real}}(\cdot \mid \Phi(o)),\;
          P_{\mathrm{scen}}(\cdot \mid \Psi(o))\Big),
\label{eq:DeltaW}
\end{equation}
where $W_1$ is the 1-Wasserstein distance with ground metric $d(\mathbf x,\mathbf y)=\|\mathbf x-\mathbf y\|_2$ \citep{peyre2019computational}.
Then for any policy $\pi$ and any such $V$, the resulting one-step operator deviation is bounded by
\begin{equation}
\big\|T_{\mathrm{hyb}}^\pi V - T_{\mathrm{scen}}^\pi V\big\|_\infty
\ \le\
\delta\,L_V L_h\,\Delta_W.
\label{eq:op_gap}
\end{equation}
\end{proposition}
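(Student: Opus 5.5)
The plan is to start from Lemma~\ref{lem:reward_cancels}. The scenario reward $r(\phi,\mathbf w;\mathbf R)$ enters both operators identically and cancels, so for every $\phi$
\[
(T_{\mathrm{hyb}}^\pi V - T_{\mathrm{scen}}^\pi V)(\phi)=\delta\,\mathbb E_{a\sim\pi(\cdot\mid\phi)}\big[\Delta(\phi,\psi,a)\big].
\]
It therefore suffices to bound $|\Delta(\phi,\psi,a)|\le L_V L_h\,\Delta_W$ uniformly in $(\phi,a)$. Jensen's inequality then pulls the absolute value inside the action expectation, and taking the supremum over $\phi$ gives the claimed sup-norm bound.

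\textbf{Lipschitz constant of the test function.} First I will show that $f_{\phi,a}(\mathbf x)=V(\mathrm{Upd}(\phi,\Pi(a),\mathbf x))$ is $L_VL_h$-Lipschitz in $\mathbf x$ with respect to $\|\cdot\|_2$. This is the step that needs care. The hypotheses are stated only through the memory component $h$: $V$ is Lipschitz in $h$ with the exogenous context held fixed, and $\mathrm{Upd}$ is Lipschitz in $\mathbf x$ through $h$.

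So I must argue that, for fixed $(\phi,\mathbf w)$, the states $\mathrm{Upd}(\phi,\mathbf w,\mathbf x)$ and $\mathrm{Upd}(\phi,\mathbf w,\mathbf y)$ differ only in $h$. The exogenous context is part of the decision-time information $o$, which is shared, and it is not altered by the hypothetical outcome vector. I will make this explicit as the interpretation of ``holding exogenous context fixed.''

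With that in place, the chain gives
\[
|f_{\phi,a}(\mathbf x)-f_{\phi,a}(\mathbf y)|\le L_V\,\|h(\mathrm{Upd}(\cdot,\mathbf x))-h(\mathrm{Upd}(\cdot,\mathbf y))\|\le L_VL_h\|\mathbf x-\mathbf y\|_2 .
\]

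\textbf{Kantorovich--Rubinstein duality.} For any two probability measures $\mu,\nu$ with finite first moments and any $L$-Lipschitz $f$, we have $|\mathbb E_\mu f-\mathbb E_\nu f|\le L\,W_1(\mu,\nu)$. Boundedness of $V$ guarantees the expectations are finite. The scenario law in \eqref{eq:SCR_pscen} is a finite empirical measure, and I will assume $P_{\mathrm{real}}$ has a finite first moment, as implicitly required for $\Delta_W<\infty$. If $\Delta_W=\infty$ the claim is trivial.

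Applying the duality with $\mu=P_{\mathrm{real}}(\cdot\mid\Phi(o))$ and $\nu=P_{\mathrm{scen}}(\cdot\mid\Psi(o))$ gives
\[
|\Delta(\phi,\psi,a)|\le L_VL_h\,W_1\big(P_{\mathrm{real}}(\cdot\mid\Phi(o)),P_{\mathrm{scen}}(\cdot\mid\Psi(o))\big)\le L_VL_h\,\Delta_W .
\]
The last inequality uses that $(\phi,\psi)=(\Phi(o),\Psi(o))$ come from a common $o$, which is exactly how the supremum in \eqref{eq:DeltaW} is indexed.

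\textbf{Assembling.} Combining the pieces,
\[
|(T_{\mathrm{hyb}}^\pi V-T_{\mathrm{scen}}^\pi V)(\phi)|\le\delta\,\mathbb E_a|\Delta|\le\delta L_VL_h\Delta_W
\]
for every $\phi$. Taking the supremum over $\phi$ yields \eqref{eq:op_gap}.

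The main obstacle is the Lipschitz step. I expect it to need only the explicit remark that the non-memory coordinates of $\mathrm{Upd}$ do not depend on $\mathbf x$, plus a note on measurability of $f_{\phi,a}$, which holds since it is continuous.
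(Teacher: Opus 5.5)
Your proof is correct and follows the same route as the paper's. Both use Lemma~\ref{lem:reward_cancels} to cancel the reward term, show that $f_{\phi,a}$ is $L_VL_h$-Lipschitz, apply Kantorovich--Rubinstein duality, and then average over $a\sim\pi(\cdot\mid\phi)$ and take the supremum over $o$. Your explicit remark that $\mathrm{Upd}(\phi,\mathbf w,\cdot)$ leaves the exogenous context unchanged makes precise a step that the paper states without justification.
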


\begin{corollary}

\label{cor:fixed_point_bias}
Let $V_{\mathrm{scen}}^\pi$ be the fixed point of $T_{\mathrm{scen}}^\pi$ and $V_{\mathrm{hyb}}^\pi$ be the fixed point of $T_{\mathrm{hyb}}^\pi$, assuming both are $\delta$-contractions on $(\mathcal V,\|\cdot\|_\infty)$.
Assume $V_{\mathrm{scen}}^\pi$ is $L_V$-Lipschitz in the outcome-updated memory component $h$, and define
\[
L_V \;:=\; \mathrm{Lip}_h\!\left(V_{\mathrm{scen}}^\pi\right).
\]
Under the conditions of Proposition~\ref{prop:op_gap},
\begin{align}
\|V_{\mathrm{hyb}}^\pi - V_{\mathrm{scen}}^\pi\|_\infty
&\le
\frac{\delta}{1-\delta}\,L_V L_h\,\Delta_W,
\label{eq:fixed_point_bias}
\end{align}
with $\Delta_W$ from \eqref{eq:DeltaW}.
\end{corollary}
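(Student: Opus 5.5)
The plan is the standard perturbation argument for fixed points of contractions, with Proposition~\ref{prop:op_gap} supplying the one-step gap. Since $V_{\mathrm{hyb}}^\pi = T_{\mathrm{hyb}}^\pi V_{\mathrm{hyb}}^\pi$ and $V_{\mathrm{scen}}^\pi = T_{\mathrm{scen}}^\pi V_{\mathrm{scen}}^\pi$, we add and subtract $T_{\mathrm{hyb}}^\pi V_{\mathrm{scen}}^\pi$:
\begin{align*}
\|V_{\mathrm{hyb}}^\pi - V_{\mathrm{scen}}^\pi\|_\infty
&\le \|T_{\mathrm{hyb}}^\pi V_{\mathrm{hyb}}^\pi - T_{\mathrm{hyb}}^\pi V_{\mathrm{scen}}^\pi\|_\infty \\
&\quad + \|T_{\mathrm{hyb}}^\pi V_{\mathrm{scen}}^\pi - T_{\mathrm{scen}}^\pi V_{\mathrm{scen}}^\pi\|_\infty .
\end{align*}
The first term is at most $\delta\|V_{\mathrm{hyb}}^\pi - V_{\mathrm{scen}}^\pi\|_\infty$ by the assumed $\delta$-contraction of $T_{\mathrm{hyb}}^\pi$. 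Both fixed points are bounded, since they lie in $\mathcal V$, so this term is finite. We can then move it to the left-hand side and divide by $1-\delta>0$.

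For the second term we apply Proposition~\ref{prop:op_gap} with $V = V_{\mathrm{scen}}^\pi$. This gives the bound $\delta L_V L_h \Delta_W$, where $L_V = \mathrm{Lip}_h(V_{\mathrm{scen}}^\pi)$, and combining the two terms yields \eqref{eq:fixed_point_bias}. The order of the add-and-subtract matters here. We must evaluate the operator gap at $V_{\mathrm{scen}}^\pi$, not at $V_{\mathrm{hyb}}^\pi$, because only $V_{\mathrm{scen}}^\pi$ is assumed to be Lipschitz in $h$. This is why we use the contraction of $T_{\mathrm{hyb}}^\pi$ and not that of $T_{\mathrm{scen}}^\pi$.

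The main obstacle is checking that Proposition~\ref{prop:op_gap} really applies to $V_{\mathrm{scen}}^\pi$, uniformly over states. By Lemma~\ref{lem:reward_cancels}, the gap at $\phi$ is $\delta\,\mathbb E_a[\Delta(\phi,\psi,a)]$. The test function $f_{\phi,a}(\mathbf x)=V_{\mathrm{scen}}^\pi(\mathrm{Upd}(\phi,\Pi(a),\mathbf x))$ depends on $\mathbf x$ only through the memory component $h$, since the exogenous context is held fixed. It is therefore $L_V L_h$-Lipschitz in $\mathbf x$ under $\|\cdot\|_2$. Kantorovich--Rubinstein duality then bounds $|\Delta|$ by $L_V L_h\, W_1(P_{\mathrm{real}}(\cdot\mid\phi),P_{\mathrm{scen}}(\cdot\mid\psi))$, and taking the supremum over $o$ gives $\Delta_W$. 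For this step we need $V_{\mathrm{scen}}^\pi$ to be bounded and measurable, which holds because it lies in $\mathcal V$. We also need the assumed Lipschitz property to hold with the exogenous part frozen, and the Corollary's hypothesis provides exactly that. Once this is confirmed, the rest is routine algebra.
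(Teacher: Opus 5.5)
Your proposal is correct and matches the paper's proof step for step. You add and subtract $T_{\mathrm{hyb}}^\pi V_{\mathrm{scen}}^\pi$, bound the first term via the $\delta$-contraction of $T_{\mathrm{hyb}}^\pi$, rearrange, and apply Proposition~\ref{prop:op_gap} at $V=V_{\mathrm{scen}}^\pi$. Your remark about why the gap must be evaluated at $V_{\mathrm{scen}}^\pi$ (the only value function assumed Lipschitz in $h$) makes explicit a choice the paper leaves implicit.
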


A conflict inherent to logged tape learning is shown in Corollary~\ref{cor:fixed_point_bias}. Applying macro scenarios directly to the reward creates immediate friction with the critic. The fundamental problem is that we are scoring actions under SCR macro scenarios, even though we still depend on the tape for bootstrapping. Consequently, the TD update gradually deviates. It follows the \emph{hybrid} operator and converges (in expectation) to
$V^\pi_{\mathrm{hyb}}$, not to the scenario-consistent value, $V^\pi_{\mathrm{scen}}$, required by the scoring objective $J_{\mathrm{scen}}$
(Eq.~\eqref{eq:jscen_clean}).
Lemma~\ref{lem:reward_cancels} isolates the core issue, showing that the mismatch stems entirely from the continuation term. This is a crucial detail since it means we can adjust the critic bootstrap without actually interfering with the tape traversal. We resolve the issue by generating a counterfactual continuation state and combining its value with the realized tape bootstrap. This creates an augmented target that reduces the mismatch. As a result, following the logic in Sec.~\ref{subsec:methods_ccm}, we can rectify the critic while ensuring our rollouts strictly faithful to the tape.

\begin{assumption}
\label{ass:sc2b_reg}
Let $\phi=(z,h)$, where $h$ is the outcome-updated memory component and $z$ denotes exogenous context.
For any fixed $(\phi,\mathbf w)$, assume the update map $\mathbf x\mapsto \mathrm{Upd}(\phi,\mathbf w,\mathbf x)$ is $L_h$-Lipschitz in $\mathbf x$
through $h$. Likewise, the scenario-consistent value $V^\pi_{\mathrm{scen}}$ is $L_V$-Lipschitz in $h$ (holding $z$ fixed).
Finally, the mismatch and proxy second moments are assumed to be finite:
\[
\begin{aligned}
W_2^2\!\Big(P_{\mathrm{real}}(\cdot\mid\phi),\,P_{\mathrm{scen}}(\cdot\mid\psi)\Big) &< \infty,\\
\mathbb E\!\left[\|\mathbf R-\mu_{\psi}\|_2^2 \,\middle|\, \psi\right] &< \infty,
\end{aligned}
\]
where 
$\mathbf R\sim P_{\mathrm{scen}}(\cdot\mid\psi)$ and $\mu_{\psi}:=\mathbb E[\mathbf R\mid\psi]$.

\end{assumption}

We bound the one-step mean squared error between the augmented target and the scenario-consistent target $Y^\star$ in accordance with Assumption~\ref{ass:sc2b_reg}.

\begin{theorem}[One-step mixing bound (Wasserstein form)]
\label{thm:sc2b_mse}
Consider a decision time with state $\phi$, descriptor $\psi$, and a raw action $a$ with executed control $\mathbf w=\Pi(a)$.
Let the scenario-consistent one-step target be
\[
\begin{aligned}
Y^\star
&:= r(\phi,\mathbf w;\mathbf R)
  + \delta\,V^\pi_{\mathrm{scen}}\!\Big(\mathrm{Upd}(\phi,\mathbf w,\mathbf R)\Big),\\
&\qquad \mathbf R \sim P_{\mathrm{scen}}(\cdot\mid\psi).
\end{aligned}
\]
Define $Y_{\beta_{\mathrm{cf}}}$ as in \eqref{eq:sc2b_mix} using $V^\pi_{\mathrm{scen}}$ in place of $V_\omega$ and
$\phi^{\mathrm{cf}}=\mathrm{Upd}(\phi,\mathbf w,\mu_\psi)$.
Under Assumption~\ref{ass:sc2b_reg}, define
\[
\begin{aligned}
\Delta_2(\phi,\psi)
&:= W_2^2\!\Big(P_{\mathrm{real}}(\cdot\mid\phi),\,P_{\mathrm{scen}}(\cdot\mid\psi)\Big),\\
\sigma^2_{\mathrm{scen}}(\psi)
&:= \mathbb E\!\left[\|\mathbf R-\mu_\psi\|_2^2 \,\middle|\, \psi\right].
\end{aligned}
\]
Then
\begin{multline}
\label{eq:sc2b_mse_bound}
\mathbb{E}\!\left[(Y_{\beta_{\mathrm{cf}}}-Y^\star)^2 \,\middle|\, \phi,\psi,a\right] \\
\shoveleft{\le 2\,\delta^2(L_VL_h)^2
\Bigl(
(1-\beta_{cf})^2\Delta_2(\phi,\psi)
+ \beta_{cf}^{\,2}\sigma^2_{\mathrm{scen}}(\psi)
\Bigr).}
\end{multline}

\end{theorem}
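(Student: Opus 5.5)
The plan is to use a coupling argument. We put the tape draw $\mathbf r\sim P_{\mathrm{real}}(\cdot\mid\phi)$ and the scenario draw $\mathbf R\sim P_{\mathrm{scen}}(\cdot\mid\psi)$ on a common probability space through a $W_2$-optimal coupling $\gamma^\star$. This is legitimate because both $Y_{\beta_{\mathrm{cf}}}$ and $Y^\star$ depend on $(\mathbf r,\mathbf R)$ only through their marginals, and the conditional expectation in \eqref{eq:sc2b_mse_bound} is evaluated under the chosen joint law. Finiteness of $\Delta_2$ from Assumption~\ref{ass:sc2b_reg} guarantees that such an optimal coupling exists, with $\mathbb E_{\gamma^\star}\|\mathbf r-\mathbf R\|_2^2=\Delta_2(\phi,\psi)$.

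Next, all three targets use the same scenario reward $r(\phi,\mathbf w;\mathbf R)$, so the reward cancels, in the spirit of Lemma~\ref{lem:reward_cancels}. Writing $f(\mathbf x):=V^\pi_{\mathrm{scen}}(\mathrm{Upd}(\phi,\mathbf w,\mathbf x))$, the identity $Y_{\beta_{\mathrm{cf}}}=(1-\beta_{cf})Y_r+\beta_{cf}Y_c$ gives
\[
Y_{\beta_{\mathrm{cf}}}-Y^\star=\delta\Bigl((1-\beta_{cf})\bigl(f(\mathbf r)-f(\mathbf R)\bigr)+\beta_{cf}\bigl(f(\mu_\psi)-f(\mathbf R)\bigr)\Bigr).
\]
Applying $(a+b)^2\le 2a^2+2b^2$ produces the leading factor $2\delta^2$ and the split into the two terms $(1-\beta_{cf})^2(f(\mathbf r)-f(\mathbf R))^2$ and $\beta_{cf}^2(f(\mu_\psi)-f(\mathbf R))^2$.

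Then I would bound each term by chaining the Lipschitz conditions of Assumption~\ref{ass:sc2b_reg}. The exogenous context $z$ is not driven by $\mathbf x$; it is fixed at decision time, which is the same "holding exogenous context fixed" convention used in Proposition~\ref{prop:op_gap}. Therefore
\[
|f(\mathbf x)-f(\mathbf y)|\le L_V\|h(\mathrm{Upd}(\phi,\mathbf w,\mathbf x))-h(\mathrm{Upd}(\phi,\mathbf w,\mathbf y))\|\le L_VL_h\|\mathbf x-\mathbf y\|_2 .
\]
Taking expectations, the first term is bounded by $(L_VL_h)^2\,\mathbb E_{\gamma^\star}\|\mathbf r-\mathbf R\|_2^2=(L_VL_h)^2\Delta_2(\phi,\psi)$. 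The second term is bounded by $(L_VL_h)^2\,\mathbb E\|\mathbf R-\mu_\psi\|_2^2=(L_VL_h)^2\sigma^2_{\mathrm{scen}}(\psi)$. Combining the two bounds gives \eqref{eq:sc2b_mse_bound}.

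The main obstacle is making the coupling step rigorous. The statement conditions only on $(\phi,\psi,a)$ and does not specify the joint law of $(\mathbf r,\mathbf R)$. I would state explicitly that the bound holds for the optimal coupling, and that the left-hand side is a coupling-dependent quantity, minimized over couplings. A second subtlety is that $z$ must be unaffected by $\mathbf x$, so that the Lipschitz property in $h$ alone suffices; I would state this as the interpretation of the assumption.
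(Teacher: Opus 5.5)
Your proposal is correct and follows the paper's proof essentially step for step. Both arguments cancel the common scenario reward, write $Y_{\beta_{\mathrm{cf}}}-Y^\star=(1-\beta_{cf})(Y_r-Y^\star)+\beta_{cf}(Y_c-Y^\star)$, split with $(u+v)^2\le 2u^2+2v^2$, and bound the two pieces through the $(L_VL_h)$-Lipschitz map $f$: the first under a $W_2$-optimal coupling by $\Delta_2(\phi,\psi)$, the second by $\sigma^2_{\mathrm{scen}}(\psi)$. Your two caveats are points the paper's proof leaves implicit, so stating them adds rigor: the left-hand side depends on the joint law of $(\mathbf r,\mathbf R)$ and is only controlled under the optimal coupling, and the context $z$ must not move with $\mathbf x$.
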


\begin{corollary}
\label{cor:sc2b_beta_star}
Let $A(\phi,\psi):=\Delta_2(\phi,\psi)$ denote the mismatch and $B(\psi):=\sigma^2_{\mathrm{scen}}(\psi)$ the proxy variance.
Minimizing the right hand side of \eqref{eq:sc2b_mse_bound} yields the optimal mixing weight:
\begin{equation}
\beta^\star_{\mathrm{cf}}(\phi,\psi)=\frac{A(\phi,\psi)}{A(\phi,\psi)+B(\psi)}\in[0,1].
\label{eq:sc2b_beta_star}
\end{equation}
\end{corollary}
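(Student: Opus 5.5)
The corollary is a one-variable minimization of the bound in Theorem~\ref{thm:sc2b_mse}, so I would fix $(\phi,\psi)$ and treat $A:=A(\phi,\psi)$ and $B:=B(\psi)$ as constants. Assumption~\ref{ass:sc2b_reg} guarantees that both are finite. Both are also nonnegative, since one is a squared Wasserstein distance and the other a conditional second moment. The right-hand side of \eqref{eq:sc2b_mse_bound} is then
\[
2\,\delta^2(L_VL_h)^2\, f(\beta_{cf}),\qquad f(\beta):=(1-\beta)^2A+\beta^2B .
\]
The prefactor $2\delta^2(L_VL_h)^2$ is nonnegative and does not depend on $\beta_{cf}$. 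If it is zero, the bound is identically zero and every $\beta_{cf}$ is optimal. Otherwise, minimizing the bound over $\beta_{cf}\in[0,1]$ is equivalent to minimizing $f$ over $[0,1]$.

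The main step is elementary calculus on the quadratic $f$. We have
\[
f'(\beta)=-2(1-\beta)A+2\beta B,\qquad f''(\beta)=2(A+B)\ge 0,
\]
so $f$ is convex on $\mathbb R$. When $A+B>0$ it is strictly convex, and its unique stationary point solves $\beta(A+B)=A$, giving $\beta^\star=A/(A+B)$. Because $A,B\ge 0$, this value automatically lies in $[0,1]$. The unconstrained minimizer is therefore feasible, and it is also the constrained minimizer on $[0,1]$, which is \eqref{eq:sc2b_beta_star}. As a sanity check I would record the optimal value $f(\beta^\star)=AB/(A+B)$, which never exceeds $\min(A,B)=\min\{f(0),f(1)\}$. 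This makes explicit that optimal mixing is at least as good as pure tape bootstrapping or pure counterfactual bootstrapping.

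The only delicate point is the degenerate case $A+B=0$, i.e.\ $A=B=0$. There the formula is $0/0$, but $f\equiv 0$, so every $\beta_{cf}\in[0,1]$ minimizes the bound. I would state that \eqref{eq:sc2b_beta_star} is to be read with any convention in $[0,1]$ in this case, for example $\beta^\star_{\mathrm{cf}}=0$.

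I would also note the limiting behaviour, which matches the bias--variance reading of the corollary. If $B=0$ with $A>0$ (a deterministic proxy), then $\beta^\star_{\mathrm{cf}}=1$. If $A=0$ with $B>0$ (no mismatch), then $\beta^\star_{\mathrm{cf}}=0$.
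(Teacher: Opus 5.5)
Your proposal is correct and follows the same route as the paper: fix $(\phi,\psi)$, minimize the quadratic $(1-\beta)^2A+\beta^2B$ by setting its derivative to zero, and use $A,B\ge 0$ to place $\beta^\star=A/(A+B)$ in $[0,1]$. You also add the convexity check $f''=2(A+B)\ge 0$, which justifies that the stationary point is a minimizer, together with the degenerate case $A=B=0$ and the optimal value $AB/(A+B)$; the paper's proof leaves these out, and they sharpen the argument without changing its route.
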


\vspace{0.05in}\noindent\textbf{Critic target augmentation is a bias--variance tradeoff.} By weighing the mismatch error, $(1-\beta_{cf})^2\,\Delta_2$, against the proxy variance, $\beta^2_{cf}\sigma^2_{\mathrm{scen}}$, the bound reveals a clear tension. Corollary ~\ref{cor:sc2b_beta_star} suggests that a "sweet spot" exists whenever both error sources are significant, mirrors the empirical trends for $\beta_{cf}$ seen in Sec.~\ref{subsec:sensitivity}.

\section{Experiments}
\label{sec:experiments}

We evaluate our method on the daily close-to-close portfolio rebalancing task to demonstrate the effectiveness and robustness of our solution, particularly in regime mixture changes.

\subsection{Experimental Setup}
\label{subsec:exp_setup}

\textbf{Datasets and universe construction.}
We use the FinRL U.S.\ equities and ETFs dataset covering 2009--2023 \citep{openfinancelab2025task1}.
Across all experiments, we enforce a chronological split: Training (2009–2017), Validation (2018–2019), and an out-of-sample Testing (2020–2023). 
A \emph{universe} is a fixed set of assets (along with their return panels) that we rebalance. To make sure our findings aren’t an artifact of any one asset set, we evaluate on 31 distinct trading universes, each containing roughly 10–20 assets. We evaluate across four categories: Market-Proxy (1 universe of broad market/sector ETFs), High-Vol (10 distinct high-volatility equity universes), Low-Vol (10 distinct low-volatility equity universes), and General (10 distinct equity universes from the broader cross-section) \citep{AngHodrickXingZhang2006CrossSectionVolatility,ZhuDingJinShen2023DissectingIVOL}.

\begin{table*}[t]
\centering
\caption{\textbf{Evaluations and comparisons of portfolio rebalancing algorithms by universe group. Values are in median [Q1, Q3].}}\vspace{-0.1in}
\label{tab:main_results}
\resizebox{\linewidth}{!}{%
\begin{tabular}{llrcccc}
\toprule
Universe & Method & Sharpe $\uparrow$ & Calmar $\uparrow$ & AnnVol $\downarrow$ & MaxDD $\downarrow$ & Turnover \\
\midrule
High-Vol & 1/N & 0.618 [0.563, 0.692] & 0.912 [0.471, 2.114] & 0.635 [0.356, 3.047] & 0.445 [0.396, 0.493] & {0.0000 [0.0000, 0.0000]} \\
 & Markowitz & 0.561 [0.476, 0.699] & 0.500 [0.317, 0.767] & 0.527 [0.258, 0.780] & 0.441 [0.411, 0.461] & 0.0256 [0.0215, 0.0283] \\
 & Inverse-Vol & 0.630 [0.576, 0.695] & 0.873 [0.411, 2.363] & 0.589 [0.316, 2.350] & 0.418 [0.404, 0.480] & 0.0175 [0.0169, 0.0182] \\
 & GMV (Ledoit-Wolf) & 0.580 [0.533, 0.645] & 0.850 [0.305, 1.397] & 0.858 [0.518, 1.076] & 0.412 [0.365, 0.449] & 0.0261 [0.0216, 0.0305] \\
 & PPO (Historical Replay) & 0.615 [0.517, 0.719] & \textbf{1.127 [0.445, 2.451]} & 0.569 [0.363, 3.674] & 0.416 [0.385, 0.498] & 0.298 [0.293, 0.299] \\
 & BootRollout--PPO & 0.598 [0.535, 0.729] & 1.112 [0.502, 2.295] & 0.647 [0.331, 3.798] & 0.414 [0.367, 0.478] & 0.2995 [0.2951, 0.2997] \\
 & SCR--PPO--Full & \textbf{0.782 [0.628, 0.948]} & 0.699 [0.519, 1.104] & \textbf{0.226 [0.199, 0.297]} & \textbf{0.238 [0.187, 0.304]} & 0.0049 [0.0044, 0.0074] \\
\addlinespace
Low-Vol & 1/N & 0.086 [0.073, 0.232] & 0.001 [-0.006, 0.075] & 0.171 [0.158, 0.180] & 0.340 [0.321, 0.368] & {0.0000 [0.0000, 0.0000]} \\
 & Markowitz & -0.084 [-0.155, -0.027] & -0.058 [-0.093, -0.039] & 0.137 [0.123, 0.147] & 0.333 [0.300, 0.344] & 0.0046 [0.0018, 0.0146] \\
 & Inverse-Vol & 0.105 [0.041, 0.155] & 0.014 [-0.015, 0.042] & 0.165 [0.143, 0.172] & 0.321 [0.276, 0.326] & 0.0143 [0.0133, 0.0152] \\
 & GMV (Ledoit-Wolf) & -0.088 [-0.131, -0.055] & -0.057 [-0.071, -0.038] & \textbf{0.112 [0.102, 0.117]} & 0.278 [0.247, 0.295] & 0.0274 [0.0215, 0.0292] \\
 & PPO (Historical Replay) & 0.079 [0.029, 0.181] & -0.004 [-0.031, 0.046] & 0.174 [0.157, 0.183] & 0.337 [0.317, 0.369] & 0.164 [0.156, 0.294] \\
 & BootRollout--PPO & 0.090 [0.054, 0.184] & 0.000 [-0.018, 0.050] & 0.176 [0.159, 0.184] & 0.343 [0.302, 0.372] & 0.2995 [0.2993, 0.2998] \\
 & SCR--PPO--Full & \textbf{0.641 [0.602, 0.707]} & \textbf{0.453 [0.400 0.569]} & 0.116 [0.109, 0.121] & \textbf{0.164 [0.150, 0.186]} & 0.0049 [0.0046, 0.0052] \\
\addlinespace
General & 1/N & 0.444 [0.376, 0.468] & 0.182 [0.148, 0.197] & 0.231 [0.229, 0.235] & 0.423 [0.414, 0.433] & {0.0000 [0.0000, 0.0000]} \\
 & Markowitz & 0.434 [0.333, 0.554] & 0.167 [0.122, 0.305] & 0.223 [0.189, 0.267] & 0.387 [0.340, 0.427] & 0.0044 [0.0018, 0.0095] \\
 & Inverse-Vol & 0.472 [0.440, 0.516] & 0.221 [0.200, 0.246] & 0.238 [0.222, 0.239] & 0.395 [0.370, 0.402] & 0.0169 [0.0159, 0.0171] \\
 & GMV (Ledoit-Wolf) & 0.200 [0.082, 0.397] & 0.063 [0.008, 0.280] & \textbf{0.148 [0.134, 0.222]} & 0.329 [0.260, 0.377] & 0.0282 [0.0263, 0.0314] \\
 & PPO (Historical Replay) & 0.435 [0.341, 0.525] & 0.188 [0.136, 0.270] & 0.238 [0.231, 0.242] & 0.409 [0.399, 0.416] & 0.194 [0.158, 0.282] \\
 & BootRollout--PPO & 0.419 [0.360, 0.517] & 0.188 [0.137, 0.239] & 0.232 [0.226, 0.235] & 0.400 [0.388 0.420] & 0.2996 [0.2995, 0.2997] \\
 & SCR--PPO--Full & \textbf{0.900 [0.768, 0.992]} & \textbf{0.972 [0.855, 1.350]} & 0.171 [0.162, 0.178] & \textbf{0.167 [0.142, 0.176]} & 0.0059 [0.0054, 0.0062] \\
\addlinespace
Market-Proxy & 1/N & 0.508 [0.508, 0.508] & 0.250 [0.250, 0.250] & 0.225 [0.225, 0.225] & 0.371 [0.371, 0.371] & {0.0000 [0.0000, 0.0000]} \\
& Markowitz & 0.381 [0.381, 0.381] & 0.165 [0.165, 0.165] & 0.210 [0.210, 0.210] & 0.362 [0.362, 0.362] & 0.0056 [0.0056, 0.0056] \\
 & Inverse-Vol & 0.494 [0.494, 0.494] & 0.241 [0.241, 0.241] & 0.217 [0.217, 0.217] & 0.361 [0.361, 0.361] & 0.0091 [0.0091, 0.0091] \\
 & GMV (Ledoit-Wolf) & 0.569 [0.569, 0.569] & 0.303 [0.303, 0.303] & 0.208 [0.208, 0.208] & 0.335 [0.335, 0.335] & 0.0468 [0.0468, 0.0468] \\
 & PPO (Historical Replay) & 0.492 [0.479, 0.523] & 0.241 [0.234, 0.268] & 0.228 [0.228, 0.228] & 0.370 [0.367, 0.372] & 0.220 [0.210, 0.267] \\
 & BootRollout--PPO & 0.496 [0.481, 0.509] & 0.245 [0.238, 0.255] & 0.227 [0.226, 0.228] & 0.365 [0.364, 0.371] & 0.2926 [0.2893, 0.2947] \\
 & SCR--PPO--Full & \textbf{1.004 [1.001, 1.014]} & \textbf{0.971 [0.928, 0.991]} & \textbf{0.175 [0.173, 0.177]} & \textbf{0.179 [0.176, 0.193]} & 0.0053 [0.0052, 0.0054] \\
\bottomrule
\end{tabular}%
}
\end{table*}

\vspace{0.05in}\noindent\textbf{Evaluation metrics.}
In line with recent related work in portfolio rebalancing and risk assessment
\citep{wang2024weightadjustment,wang2025risksensitive,choudhary2025finxplore},
we evaluate out-of-sample performance using a set of comprehensive metrics. The assessment of risk-adjusted performance employs the Sharpe ratio and the Calmar ratio, which respectively reflect the average excess return per unit of volatility and drawdown-aware performance. To assess the total risk exposure, we measure annualized volatility (AnnVol), while tail risk is analyzed via maximum drawdown (MaxDD), defined as the largest peak-to-trough loss.
Additionally, we report turnover as a measure of trading intensity and implementability, computed as the average $\ell_1$ change in portfolio weights. These metrics, taken together, show how returns and risk behave when the market environment changes and the distribution shifts.

\noindent\textbf{Baselines.} We compare against both classical portfolio baselines and RL-based solutions.
All reported metrics are computed from realized net returns on the held-out out-of-sample test window. 
\begin{itemize}[leftmargin=1.2em, nosep]
\item \emph{Classic portfolio algorithms,} including (i) equal-weight $1/N$ \citep{DeMiguel2009, GelminiUberti2024},
(ii) mean--variance \citep{Markowitz1952_MARKOWITZ, LaiYang2023Survey},
(iii) inverse-volatility weighting \citep{Chaves2011_InverseVol_RiskParityHeuristics, ShimizuShiohama2020IFV},
and (iv) global minimum-variance (GMV) with Ledoit--Wolf shrinkage \citep{Ledoit2004honey, LedoitWolf2020ANS}.
\item \emph{PPO (Historical Replay):} PPO trained only on realized market transitions, using PPO-Clip~\cite{Schulman2017PPO} with generalized advantage estimation~\cite{schulman2015gae}.
\item \emph{BootRollout--PPO (with historical simulation):} 
Inspired by historical simulation \citep{FrancqZakoian2020VHS}, we bootstrap $S$ joint return vectors from a trailing window of realized returns
(preserving cross-asset co-movements) and score portfolios with a downside-risk--penalized
objective (entropic left-tail) \citep{Fei2021ExpBellman}; PPO otherwise trains on the same realized
tape transitions with standard bootstrapping.
\item \emph{SCR--PPO--RewardOnly:} A variant of our solution in which SCR macro scenarios for reward only, with {standard} tape bootstrapping and {without} regularization.
\item \emph{SCR--PPO--NoCF:} A variant of our solution in which SCR macro scenarios for reward with regularization, but bootstrapping uses realized tape continuation only.
\item \emph{SCR–-PPO--Full:} SCR macro scenarios for reward with regularization stack and augmented critic bootstrap target ($\beta_{cf}>0$). 
\end{itemize}

\vspace{0.05in}\noindent\textbf{Shock Discovery}.
SCR conditions applied to the channel-activation vector $\boldsymbol{\chi}_t$ confirms the conditioning signal is non-degenerate and exhibits reasonable behaviour in out-of-sample. On Market-Proxy, the detector successfully re-identifies recurring channels in validation and test, producing similar $\boldsymbol{\chi}_t$ patterns. Notably, it identifies March 2020 as out-of-support when the lookback window activates channels that were not encountered during training. In instances of novelty detection, SCR resorts to acceptable nearest-neighbor resampling from the available historical data, thereby avoiding retrieval from unrelated episodes.
Table~\ref{tab:shock_ledger} presents representative entries from ShockLedger, verifying that regime summaries via $\boldsymbol{\chi}_t$ are stable and capable of identifying novel instances outside the existing support. Building on this qualitative verification, we turn to quantitative evaluation to test the effect of regime-local scenario conditioning, including generalization and value learning.

\begin{table}[t]
\centering
\caption{\textbf{Illustrative ShockLedger channels used in experiments.}}\vspace{-0.1in}
\label{tab:shock_ledger}
\resizebox{\linewidth}{!}{%
\begin{tabular}{l l l l}
\toprule
\textbf{Episode Type} & \textbf{Macro Signature} & \textbf{Top Moves} & \textbf{Activity Period} \\
\midrule
Recurring &
\begin{tabular}[c]{@{}l@{}}
\textbf{Geopolitical/Oil Shock}\\
$\mathrm{GPR}\uparrow,\ \mathrm{Brent}\uparrow,\ \mathrm{Energy}\uparrow$
\end{tabular} &
\begin{tabular}[c]{@{}l@{}}
$+$(\texttt{XLP}, \texttt{XLV});\\
$-$(\texttt{XLF}, \texttt{XLB})
\end{tabular} &
\begin{tabular}[c]{@{}l@{}}
2009--2023\\
(38 days; 14 in train)
\end{tabular}
\\
\midrule
Recurring &
\begin{tabular}[c]{@{}l@{}}
\textbf{Policy Uncertainty}\\
$\mathrm{GPR}\downarrow,\ \mathrm{EPU}\uparrow,\ \mathrm{WTI}\downarrow$
\end{tabular} &
\begin{tabular}[c]{@{}l@{}}
$+$(\texttt{XLK}, \texttt{XLV});\\
$-$(\texttt{XLE}, \texttt{XLF})
\end{tabular} &
\begin{tabular}[c]{@{}l@{}}
2009--2021\\
(18 days; 16 in train)
\end{tabular}
\\
\midrule
Novel (Anomaly) &
\begin{tabular}[c]{@{}l@{}}
\textbf{COVID-19 Liquidity Crisis}\\
\textit{No training-set channel match}\\
\textit{Flagged by novelty filter}
\end{tabular} &
\textit{Detected as novel episode} &
\begin{tabular}[c]{@{}l@{}}
March 2020\\
(Consecutive days)
\end{tabular}
\\
\bottomrule
\end{tabular}%
}
\end{table}

\subsection{Evaluation on Out-of-Sample Regimes}
\label{subsec:oos_perf}

Table~\ref{tab:main_results} presents the evaluation metrics for High-Vol, Low-Vol, General, and Market-Proxy groups. SCR--PPO--Full enhances risk-adjusted performance and tail behavior out-of-sample. It stays reliable across different regimes and cross-sectional variations.

\begin{table*}[t]
\centering
\caption{\textbf{Ablation study of our solution. We report median values across multiple runs.}} \vspace{-0.1in}
\label{tab:ablations_overall}
\footnotesize
\setlength{\tabcolsep}{3.5pt}
\renewcommand{\arraystretch}{0.95}
\resizebox{\linewidth}{!}{%
\begin{tabular}{l r ccccccc}
\toprule
Variant & Sharpe & Calmar & AnnVol & MaxDD & Turnover & Gap$_{\mathrm{final}}$ & Resid-AUC \\
\midrule
PPO (Historical Replay) & 0.457 [0.256, 0.543] & 0.222 [0.086, 0.335] & 0.231 [0.188, 0.347] & 0.381 [0.359, 0.414]  & 0.2320 [0.1600, 0.2983] & 0.526 [0.304, 1.882] & 0.217 [0.204, 0.231] \\
\addlinespace[2pt]
SCR--PPO--RewardOnly
& 0.506 [-0.054, 0.650]
& 0.364 [0.223, 0.584]
& \textbf{0.127 [0.107, 0.205]}
& 0.181 [0.148, 0.330]
& 0.0050 [0.0049, 0.0051]
& 0.0003 [0.0001, 0.0016]
& 0.102 [0.054, 0.115] \\
\addlinespace[2pt]
SCR--PPO--NoCF
& 0.647 [0.571, 0.810]
& 0.561 [0.379, 0.833]
& 0.146 [0.110, 0.217]
& 0.184 [0.158, 0.198]
& \textbf{0.0045 [0.0042, 0.0049]}
& 0.0001 [0.0000, 0.0012]
& 0.087 [0.077, 0.098] \\
SCR--PPO--Full
& \textbf{0.805 [0.635, 0.970]}
& \textbf{0.737 [0.494, 0.984]}
& 0.171 [0.122, 0.196]
& \textbf{0.179 [0.162, 0.194]}
& 0.0046 [0.0044, 0.0049]
& \textbf{0.0001 [0.0000, 0.0003]}
& \textbf{0.075 [0.061, 0.084]} \\
\bottomrule
\end{tabular}%
}
\end{table*}

\begin{figure}[t]
    \centering
    \includegraphics[width=\columnwidth]{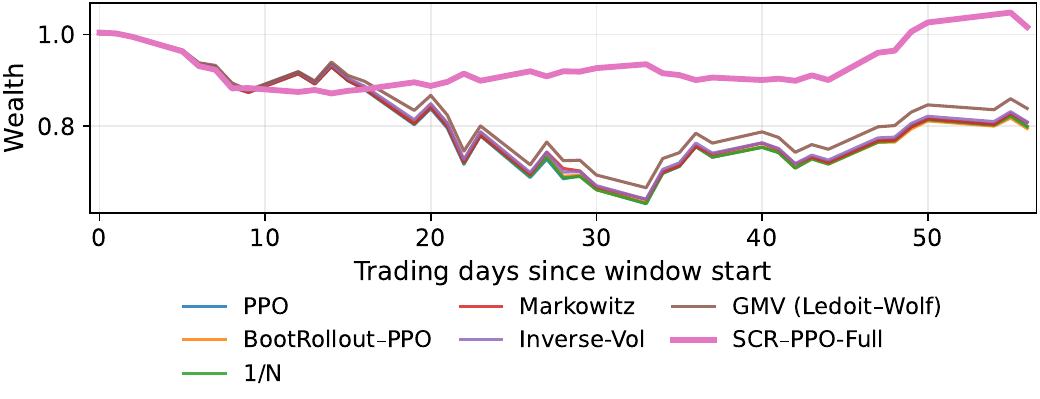}
    \includegraphics[width=\columnwidth]{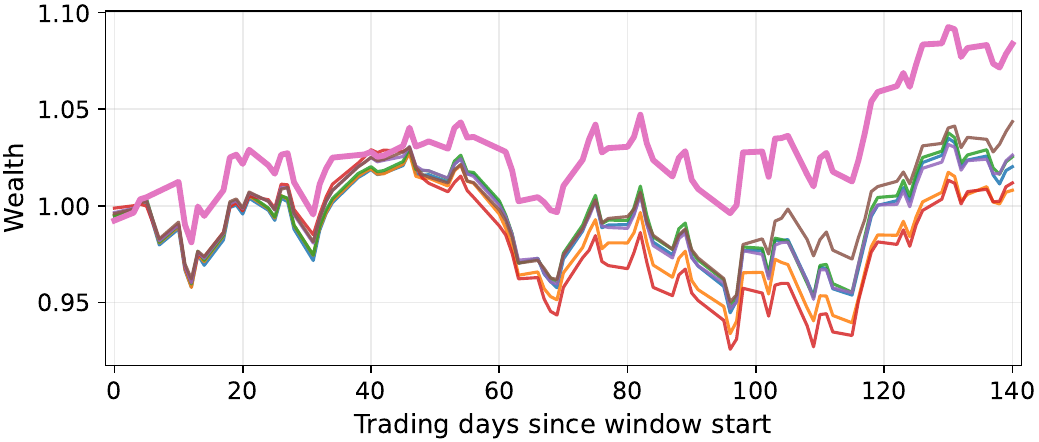}
    \caption{ We stress test under shocks. The wealths are rebased to 1 at the window start. Top: COVID-19 sell-off window. Bottom: 2021–2022 macro shock window.}

    \label{fig:robustness_windows}
\end{figure}
In comparison to PPO (Historical Replay), Sharpe shows consistent increase: $0.615 \rightarrow 0.782$ (High-Vol), $0.079 \rightarrow 0.641$ (Low-Vol), $0.435 \rightarrow 0.900$ (General), and $0.492 \rightarrow1.004$ (Market-Proxy). These enhancements are accompanied by reduced drawdowns (e.g., High-Vol MaxDD $0.416 \rightarrow 0.238$, Low-Vol $0.337 \rightarrow 0.164$ General $0.409 \rightarrow 0.167$, Market-Proxy $0.370 \rightarrow 0.179$), along with negligible turnover ($\approx 0.004$--$0.006$) in contrast to PPO’s high turnover ($\approx 0.15$--$0.30$), suggesting reduced churn without sacrificing downside control.

\vspace{0.05in}\noindent\textbf{Qualitative stress-window evidence.}
On {Market-Proxy}, Fig.~\ref{fig:robustness_windows} provides a stringent stress test: during two major crisis windows, SCR--PPO--Full protects capital and keeps drawdowns smaller.
It also stays more stable throughout the downturn.
When markets rebound, it compounds faster than both RL and classical rebalancing baselines.
Overall, it achieves higher cumulative returns under sharp regime shifts.

\subsection{Ablations and Scenario-to-Real Analysis}
\label{subsec:ablations_sim2real}

To ensure that the scenario--real mismatch and critic stability are observable, we report seed-averaged diagnostics on Market-Proxy.
Here, beyond the aggregate metrics in Sec. ~\ref{subsec:oos_perf}, we track diagnostics that correspond to the operator and mixing analysis in Sec.~\ref{sec:theory}.
We gauge scenario--real mismatch on held-out dates via scenario-to-real gap
$\mathrm{Gap}_{\mathrm{final}} \;=\; \big|\widehat J_{\mathrm{scen}}-\widehat J_{\mathrm{real}}\big|,$ 
where $\widehat J_{\mathrm{scen}}$ is the \emph{cumulative average daily return} over the test dates under SCR scenario scoring and
$\widehat J_{\mathrm{real}}$ is the corresponding cumulative average daily return realized on the tape.
We summarize critic stability by \emph{Residual-AUC}, the area under the Bellman-residual curve $\mathrm{resid}_{\ell_2}$ versus training progress.

Table~\ref{tab:ablations_overall} demonstrates the results. Relative to PPO trained on Historical Replay, introducing scenario-conditioned training via {SCR--PPO--RewardOnly} improves pooled out-of-sample robustness:
Sharpe increases ($0.457\!\rightarrow\!0.506$), drawdowns improve (MaxDD $0.381\!\rightarrow\!0.168$), and turnover drops sharply ($0.232\!\rightarrow\!0.005$).
SCR also reduces mismatch (Gap$_{\mathrm{final}}$ $0.526\!\rightarrow\!3\times 10^{-4}$) and lowers critic error (Resid-AUC $0.217\!\rightarrow\!0.102$), consistent with higher-support scenario conditioning stabilizing value learning on the logged tape.

\begin{figure}[!t]
    \centering
    \includegraphics[height=0.5\columnwidth]{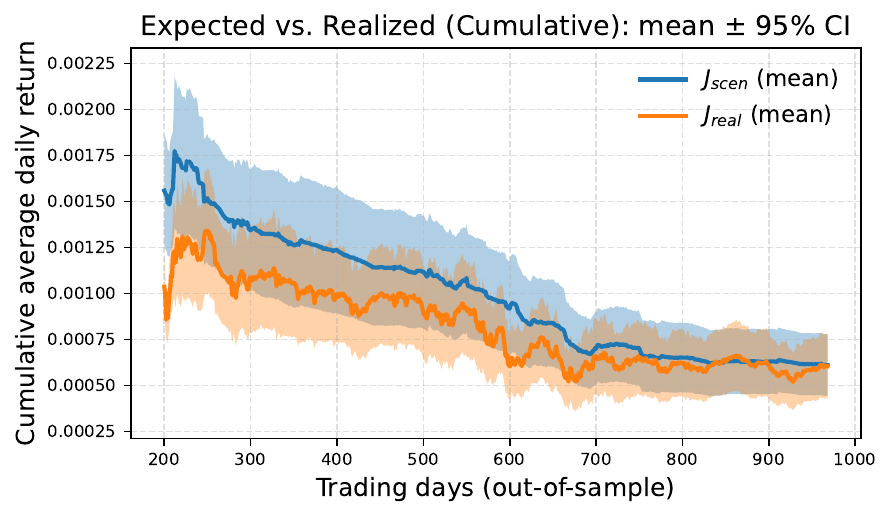}\vspace{-0.1in}
    \caption{
    \textbf{Scenario-to-real validation.}
    Cumulative average daily return under SCR scenario scoring versus realized tape returns.}

    \label{fig:sim_real_gap}
\end{figure}

\begin{figure}[!t]
    \centering
    \includegraphics[width=\columnwidth]{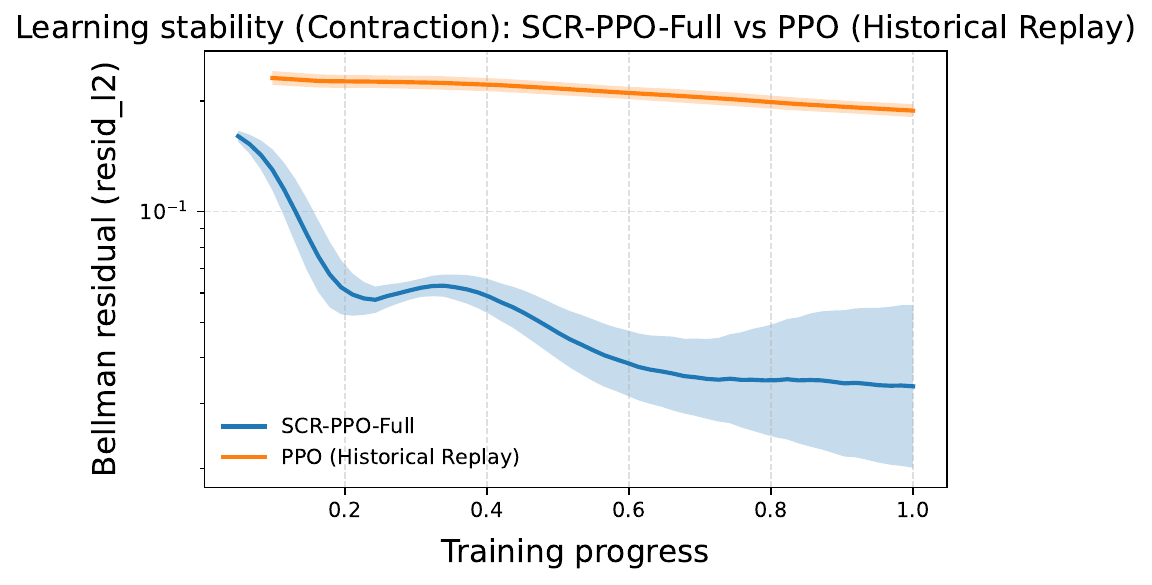}\vspace{-0.1in}
    \caption{\textbf{Critic stability under logged-tape mismatch.}
    Bellman residual $\mathrm{resid}_{\ell_2}$ versus training progress (mean $\pm$ 95\% CI). Compared to PPO (Historical Replay), SCR--PPO--Full attains smaller residuals, and improved critic stability.}

    \label{fig:critic_stability_resid_l2}
\end{figure}

Our controlled comparison {SCR--PPO--NoCF $\rightarrow$ SCR--PPO--Full} holds the scenario mechanism and regularization stack fixed and changes only the critic bootstrap target, isolating continuation mismatch (Lemma~\ref{lem:reward_cancels}).
Enabling counterfactual continuation mixing improves pooled risk-adjusted performance (Sharpe $0.647\!\rightarrow\!0.805$, Calmar $0.561\!\rightarrow\!0.737$) and reduces critic error (Resid-AUC $0.087\!\rightarrow\!0.075$).
On Market-Proxy, we report mean $\pm$ 95\% confidence intervals for $\widehat J_{\mathrm{scen}}$ and $\widehat J_{\mathrm{real}}$ (Fig.~\ref{fig:sim_real_gap}),
and evaluate critic stability via TD-residual contraction (Fig.~\ref{fig:critic_stability_resid_l2}), as motivated by Theorem~\ref{thm:sc2b_mse}.
These seed-averaged plots make the mismatch and critic contraction predicted by our theory directly observable (Figs.~\ref{fig:sim_real_gap}--\ref{fig:critic_stability_resid_l2}).

\begin{table}
\caption{\textbf{Sensitivity to $\beta_{cf}$. We report median Sharpe here.}} \vspace{-0.1in}
\label{tab:beta_sweep}
\scriptsize
\setlength{\tabcolsep}{4pt}
\renewcommand{\arraystretch}{0.9}
\begin{tabular*}{\columnwidth}{@{\extracolsep{\fill}}lcccc}
\toprule
Setting & High-Vol & Low-Vol & General & Market-Proxy \\
\midrule
$\beta_{cf}=0.0$ & 0.736 & 0.598 & 0.640 & 0.821 \\
$\beta_{cf}=0.25$ & 0.529 & 0.529 & 0.716 & 0.823 \\
$\beta_{cf}=0.5$  & \textbf{0.782} & \textbf{0.641} & \textbf{0.900} & \textbf{1.004} \\
$\beta_{cf}=0.75$ & 0.518 & 0.509 & 0.671 & 0.838 \\
$\beta_{cf}=1.0$  & 0.563 & 0.584 & 0.645 & 0.842 \\
\bottomrule
\end{tabular*}
\end{table}

\subsection{Sensitivity Analysis}
\label{subsec:sensitivity}
\noindent\textbf{Counterfactual mixing $\beta_{cf}$.}
We show in Table~\ref{tab:beta_sweep} that using a moderate coefficient results in consistent performance across different universes. In contrast, setting the coefficient in both extremes, i.e., $\beta_{cf}=0$ uses only continuation mismatch, while $\beta_{cf}=1$ relies entirely on counterfactual continuation and can amplify approximation error, resulting in degraded performance. This observation aligns with the bias--variance trade-off in Theorem~\ref{thm:sc2b_mse}.

\section{Conclusion}
\label{sec:conclusion}

To improve the return and robustness of reinforcement learning-based portfolio rebalancing, we propose Scenario-Context Rollout (SCR), which is a macroeconomic-guided feedback mechanism to produce scenario returns. To effectively train the RL agent using historical tapes, we analyzed how combining scenario-based rewards with tape-realized transitions induces a continuation mismatch. To address this issue, we augment the critic target using counterfactual continuation; doing so provides a principled bias--variance trade-off and stabilizes critic updates. Our experiments show that the proposed solution improves out-of-sample Sharpe ratios and reduces drawdowns compared with strong baselines. Meanwhile, our solution remains effective across market regimes.

\appendix

\bibliographystyle{unsrtnat}
\bibliography{ref}
\section{Additional Proofs}
\label{app:proofs}

\begin{definition}[Induced continuation distributions]
\label{def:cont_kernels}
Fix a decision-time state $\phi$ and action $a$ with executed control $\mathbf w=\Pi(a)$.
Define the realized continuation distribution as
\begin{equation}
\begin{aligned}
K_{\mathrm{real}}(B\mid \phi,a)
&:= \int \mathbf 1\{E_{B}(\phi,\mathbf w,\mathbf r)\}\,
P_{\mathrm{real}}(d\mathbf r\mid \phi), \\
E_{B}(\phi,\mathbf w,\mathbf r)
&:= \bigl(\mathrm{Upd}(\phi,\mathbf w,\mathbf r)\in B\bigr).
\end{aligned}
\end{equation}

and the scenario-consistent continuation distribution by
\begin{equation}
\begin{aligned}
K_{\mathrm{scen}}(B\mid \phi, \psi, a)
&:= \int \mathbf 1\{E_{B}(\phi,\mathbf w,\mathbf R)\}\,
P_{\mathrm{scen}}(d\mathbf R\mid \psi), \\
E_{B}(\phi,\mathbf w,\mathbf R)
&:= \bigl(\mathrm{Upd}(\phi,\mathbf w,\mathbf R)\in B\bigr).
\end{aligned}
\end{equation}

for any measurable set $B$ in the state space.
\end{definition}

\subsection{Proof of Lemma~\ref{lem:reward_cancels}}
\label{app:proof_reward_cancels}

\begin{proof}
Fix $\phi$ and a policy $\pi$. By definition,
\[
(T_{\mathrm{hyb}}^\pi V)(\phi)
=
\mathbb E\!\left[
r(\phi,\mathbf w;\mathbf R)+\delta\,V(\mathrm{Upd}(\phi,\mathbf w,\mathbf r))
\ \middle|\ \phi
\right],
\]
where $a\sim\pi(\cdot\mid\phi)$, $\mathbf w=\Pi(a)$, $\mathbf R\sim P_{\mathrm{scen}}(\cdot\mid\psi)$ and
$\mathbf r\sim P_{\mathrm{real}}(\cdot\mid\phi)$.
Similarly,
\[
(T_{\mathrm{scen}}^\pi V)(\phi)
=
\mathbb E\!\left[
r(\phi,\mathbf w;\mathbf R)+\delta\,V(\mathrm{Upd}(\phi,\mathbf w,\mathbf R))
\ \middle|\ \phi
\right].
\]
Subtracting, the reward terms cancel since both operators evaluate $r(\phi,\mathbf w;\cdot)$ on the same scenario draw $\mathbf R$:
\[
\begin{gathered}
\Delta^\pi V(\phi) := (T_{\mathrm{hyb}}^\pi V - T_{\mathrm{scen}}^\pi V)(\phi),\\
\phi^+(\phi,a,x) := \mathrm{Upd}(\phi,\Pi(a),x).
\end{gathered}
\]
\begin{equation}
\begin{aligned}
\Delta^\pi V(\phi)
&=
\delta\,\mathbb E_{a\sim\pi(\cdot\mid\phi)}\!\left[\Delta_V(\phi,\psi,a)\right], \\
\Delta_V(\phi,\psi,a)
&:=
\mathbb E_{\mathbf r}\,V\!\big(\phi^+(\phi,a,\mathbf r)\big)
-
\mathbb E_{\mathbf R}\,V\!\big(\phi^+(\phi,a,\mathbf R)\big).
\end{aligned}
\end{equation}

Using the induced continuation distributions $K_{\mathrm{real}}(\cdot\mid\phi,a)$ and $K_{\mathrm{scen}}(\cdot\mid\phi,\psi,a)$ from
Definition~\ref{def:cont_kernels}, rewrite
\begin{equation}
\mathbb E_{\mathbf r}\,V\!\big(\mathrm{Upd}(\phi,\Pi(a),\mathbf r)\big)
= \int V(\phi')\,K_{\mathrm{real}}(d\phi'\mid\phi,a).
\end{equation}

\begin{flalign}
&\mathbb E_{\mathbf R}\,V\!\big(\mathrm{Upd}(\phi,\Pi(a),\mathbf R)\big) && \\
&= \int V(\phi')\,K_{\mathrm{scen}}\!\left(d\phi'\mid\phi,\psi,a\right). &&
\end{flalign}

Therefore
\[
\begin{aligned}
\pi_\phi 
&:= \pi(\cdot\mid\phi),\\
\Delta K_{\phi,\psi,a}(d\phi')
&:= K_{\mathrm{real}}(d\phi'\mid\phi,a)-K_{\mathrm{scen}}(d\phi'\mid\phi,\psi,a).\\
\langle V,\Delta K_{\phi,\psi,a}\rangle
&:= \int V(\phi')\,\Delta K_{\phi,\psi,a}(d\phi').
\end{aligned}
\]

where $\Delta K:=K_{\mathrm{real}}-K_{\mathrm{scen}}$.
If $P_{\mathrm{real}}(\cdot\mid\phi)=P_{\mathrm{scen}}(\cdot\mid\psi)$, then $K_{\mathrm{real}}=K_{\mathrm{scen}}$ and the difference is zero.
If $\mathrm{Upd}(\phi,\mathbf w,\cdot)$ is outcome-independent, then both induced kernels coincide as well.
\end{proof}

\subsection{Proof of Proposition~\ref{prop:op_gap}}
\label{app:proof_op_gap}

\begin{proof}
Fix $\phi$. Fix an $o$ such that $\Phi(o)=\phi$, and set $\psi:=\Psi(o)$. By Lemma~\ref{lem:reward_cancels},
\[
\begin{aligned}
m_{\mathrm{real}}(\phi,a)
&:= \mathbb E_{\mathbf r\sim P_{\mathrm{real}}(\cdot\mid\phi)} f_{\phi,a}(\mathbf r),\\
m_{\mathrm{scen}}(\phi,\psi,a)
&:= \mathbb E_{\mathbf R\sim P_{\mathrm{scen}}(\cdot\mid\psi)} f_{\phi,a}(\mathbf R).
\end{aligned}
\]
Let
\[
\Delta^\pi V(\phi) := (T_{\mathrm{hyb}}^\pi V - T_{\mathrm{scen}}^\pi V)(\phi).
\]
Then
\[
\Delta^\pi V(\phi)
= \delta\,\mathbb E_{a\sim\pi(\cdot\mid\phi)}\!\Big[
m_{\mathrm{real}}(\phi,a)-m_{\mathrm{scen}}(\phi,\psi,a)
\Big]
\]
where $f_{\phi,a}(\mathbf x):=V(\mathrm{Upd}(\phi,\Pi(a),\mathbf x))$.

By assumption, $\mathbf x\mapsto \mathrm{Upd}(\phi,\Pi(a),\mathbf x)$ is $L_h$-Lipschitz through the outcome-updated memory component and
$V$ is $L_V$-Lipschitz in that component, hence $f_{\phi,a}$ is $(L_VL_h)$-Lipschitz under $\|\cdot\|_2$.
By Kantorovich--Rubinstein duality for $W_1$ (with cost $d(\mathbf x,\mathbf y)=\|\mathbf x-\mathbf y\|_2$) \citep{kantorovich1958space},
\[
P_{\mathrm{real}}^\phi := P_{\mathrm{real}}(\cdot\mid\phi),
\qquad
P_{\mathrm{scen}}^\psi := P_{\mathrm{scen}}(\cdot\mid\psi).
\]
 
\[
\begin{aligned}
\Bigl|
\mathbb E_{\mathbf r} f_{\phi,a}(\mathbf r)
-
\mathbb E_{\mathbf R} f_{\phi,a}(\mathbf R)
\Bigr|
&\le (L_VL_h)\, W_1\!\bigl(P_{\mathrm{real}}^\phi,\,P_{\mathrm{scen}}^\psi\bigr).
\end{aligned}
\]

Taking expectation over $a\sim\pi(\cdot\mid\phi)$, multiplying by $\delta$, and then taking the supremum over $o$ yields
\[
\Delta_W
:= \sup_{o}\ 
W_1\!\Big(P_{\mathrm{real}}(\cdot \mid \Phi(o)),\;
          P_{\mathrm{scen}}(\cdot \mid \Psi(o))\Big).
\]
\[
\|T_{\mathrm{hyb}}^\pi V - T_{\mathrm{scen}}^\pi V\|_\infty
\le \delta L_VL_h\,\Delta_W.
\]

\end{proof}

\subsection{Proof of Corollary~\ref{cor:fixed_point_bias}}
\label{app:proof_fixed_point_bias}

\begin{proof}
Let $V_{\mathrm{hyb}}^\pi$ and $V_{\mathrm{scen}}^\pi$ be the fixed points of $T_{\mathrm{hyb}}^\pi$ and $T_{\mathrm{scen}}^\pi$.
Since both operators are $\delta$-contractions in $\|\cdot\|_\infty$,
\[
\begin{aligned}
\|V_{\mathrm{hyb}}^\pi - V_{\mathrm{scen}}^\pi\|_\infty
&= \bigl\|T_{\mathrm{hyb}}^\pi V_{\mathrm{hyb}}^\pi
      - T_{\mathrm{scen}}^\pi V_{\mathrm{scen}}^\pi\bigr\|_\infty \\
&\le \bigl\|T_{\mathrm{hyb}}^\pi V_{\mathrm{hyb}}^\pi
      - T_{\mathrm{hyb}}^\pi V_{\mathrm{scen}}^\pi\bigr\|_\infty \\
&\quad + \bigl\|(T_{\mathrm{hyb}}^\pi - T_{\mathrm{scen}}^\pi)
      V_{\mathrm{scen}}^\pi\bigr\|_\infty .
\end{aligned}
\]

The first term is at most $\delta\|V_{\mathrm{hyb}}^\pi - V_{\mathrm{scen}}^\pi\|_\infty$ by contraction.
Rearranging gives
\[
\|V_{\mathrm{hyb}}^\pi - V_{\mathrm{scen}}^\pi\|_\infty
\le \frac{1}{1-\delta}\|(T_{\mathrm{hyb}}^\pi - T_{\mathrm{scen}}^\pi)V_{\mathrm{scen}}^\pi\|_\infty.
\]
Apply Proposition~\ref{prop:op_gap} with $V=V_{\mathrm{scen}}^\pi$ to obtain \eqref{eq:fixed_point_bias}.
\end{proof}

\subsection{Proof of Theorem~\ref{thm:sc2b_mse}}
\label{app:proof_sc2b_mse}
\begin{proof}
Fix $(\phi,\psi,a)$ and write $\mathbf w=\Pi(a)$.
Define
\[
f(\mathbf x) := V_{\mathrm{scen}}^\pi\!\big(\mathrm{Upd}(\phi,\mathbf w,\mathbf x)\big).
\]
By Assumption~\ref{ass:sc2b_reg}, $\mathbf x\mapsto \mathrm{Upd}(\phi,\mathbf w,\mathbf x)$ is $L_h$-Lipschitz through the outcome-updated memory component
and $V_{\mathrm{scen}}^\pi$ is $L_V$-Lipschitz in that component, hence $f$ is $(L_VL_h)$-Lipschitz under $\|\cdot\|_2$.

Let $\mathbf R\sim P_{\mathrm{scen}}(\cdot\mid\psi)$, $\mathbf r\sim P_{\mathrm{real}}(\cdot\mid\phi)$, and $\mu_\psi=\mathbb E[\mathbf R\mid\psi]$.
Since both $Y^\star$ and $Y_{\beta_{\mathrm{cf}}}$ use the same reward term $r(\phi,\mathbf w;\mathbf R)$, the deviation enters only via continuation:
\[
\begin{aligned}
Y_r - Y^\star &= \delta\big(f(\mathbf r)-f(\mathbf R)\big),\\
Y_c - Y^\star &= \delta\big(f(\mu_\psi)-f(\mathbf R)\big).
\end{aligned}
\]
Let $\beta=\beta_{cf}$ and $\alpha=1-\beta$. Then
\[
Y_{\beta}-Y^\star=\alpha(Y_r-Y^\star)+\beta(Y_c-Y^\star).
\]
Using $(u+v)^2\le 2u^2+2v^2$ yields
\[
(Y_{\beta}-Y^\star)^2
\le
2\alpha^2(Y_r-Y^\star)^2 + 2\beta^2(Y_c-Y^\star)^2.
\]
Taking conditional expectations,
\begin{equation}
\label{eq:pf_split}
\begin{aligned}
\mathbb E\!\left[(Y_{\beta}-Y^\star)^2 \mid \phi,\psi,a\right]
\;&\le\;
2\alpha^2\,\mathbb E\!\left[(Y_r-Y^\star)^2 \mid \phi,\psi,a\right] \\
&\quad+\;
2\beta^2\,\mathbb E\!\left[(Y_c-Y^\star)^2 \mid \phi,\psi,a\right].
\end{aligned}
\end{equation}

For the realized-continuation term, let $\gamma^\star$ be an optimal coupling of
$P_{\mathrm{real}}(\cdot\mid\phi)$ and $P_{\mathrm{scen}}(\cdot\mid\psi)$, so that
\[
\Delta_2(\phi,\psi)
=
W_2^2\!\Big(P_{\mathrm{real}}(\cdot\mid\phi),P_{\mathrm{scen}}(\cdot\mid\psi)\Big)
=
\mathbb E_{\gamma^\star}\big[\|\mathbf r-\mathbf R\|_2^2\big].
\]
By Lipschitzness of $f$,
\[
|f(\mathbf r)-f(\mathbf R)| \le (L_VL_h)\|\mathbf r-\mathbf R\|_2,
\]
hence
\[
\begin{aligned}
\mathbb E\!\left[(Y_r-Y^\star)^2 \mid \phi,\psi,a\right]
&=
\delta^2\,\mathbb E\!\left[(f(\mathbf r)-f(\mathbf R))^2 \mid \phi,\psi,a\right] \\
&\le
\delta^2(L_VL_h)^2\,\Delta_2(\phi,\psi).
\end{aligned}
\]

For the proxy-continuation term, by Lipschitzness,
\[
|f(\mu_\psi)-f(\mathbf R)| \le (L_VL_h)\|\mu_\psi-\mathbf R\|_2,
\]
so
\[
\begin{aligned}
\mathbb E\!\left[(Y_c-Y^\star)^2 \mid \phi,\psi,a\right]
&=
\delta^2\,\mathbb E\!\left[(f(\mu_\psi)-f(\mathbf R))^2 \mid \phi,\psi,a\right] \\
&\le
\delta^2(L_VL_h)^2\,\mathbb E\!\left[\|\mathbf R-\mu_\psi\|_2^2 \mid \psi\right] \\
&=
\delta^2(L_VL_h)^2\,\sigma^2_{\mathrm{scen}}(\psi).
\end{aligned}
\]

Substitute into \eqref{eq:pf_split} to obtain \eqref{eq:sc2b_mse_bound}.
\end{proof}

\subsection{Proof of Corollary~\ref{cor:sc2b_beta_star}}
\label{app:proof_beta_star}

\begin{proof}
We derive the optimal mixing weight by minimizing 
\[
g(\beta) := (1-\beta)^2A(\phi,\psi)+\beta^2B(\psi),\qquad \beta\in[0,1].
\]
Upon expansion, we obtain
$g(\beta)=A(\phi,\psi)-2A(\phi,\psi)\beta+(A(\phi,\psi)+B(\psi))\beta^2$.
Next, we differentiate this expression and set the derivative equal to zero:
\[
\begin{aligned}
g'(\beta)
&= -2A(\phi,\psi) + 2\bigl(A(\phi,\psi)+B(\psi)\bigr)\beta = 0,\\
\Rightarrow\qquad
\beta^\star
&= \frac{A(\phi,\psi)}{A(\phi,\psi)+B(\psi)}.
\end{aligned}
\]
Given that both $A(\phi,\psi)$ and $B(\psi)$ are non-negative, it follows that  $\beta^\star\in[0,1]$.
\end{proof}

\end{document}